\documentclass{article}

\usepackage[preprint]{staix_2026}      % non-anonymous preprint

\usepackage[utf8]{inputenc}
\usepackage[T1]{fontenc}
\usepackage{xurl}
\usepackage{hyperref}
\usepackage{url}
\usepackage{booktabs}
\usepackage{amsfonts}
\usepackage{amssymb}
\usepackage{amsmath}
\usepackage{microtype}
\usepackage{nicefrac}
\usepackage{xcolor}
\usepackage{graphicx}
\usepackage{multirow}
\usepackage{array}
\usepackage{xcolor}
\usepackage{subcaption}
\usepackage{amsthm}
\usepackage{bbm}
\usepackage{dsfont}
\theoremstyle{plain}

\usepackage{algorithm}   % for algorithm environment
\usepackage{algorithmic} % for algorithmic environment

\usepackage[most]{tcolorbox}
\tcbuselibrary{listings,breakable}

\newtcblisting{promptbox}[1]{
  breakable,
  colback=gray!3,
  colframe=black!60,
  title=#1,
  fonttitle=\bfseries,
  listing only,
  listing options={
    basicstyle=\ttfamily\small,
    breaklines=true,
    columns=fullflexible,
    keepspaces=true
  },
  left=1mm,
  right=1mm,
  top=1mm,
  bottom=1mm,
  boxrule=0.5pt,
  arc=1mm
}

\newtheorem{theorem}{Theorem}[section]

\newtheorem{assumption}{Assumption}
\newtheorem{corollary}{Corollary}

\title{NeuroMAS: Multi-Agent Systems as Neural Networks with Joint Reinforcement Learning}

\author{
  Haoran Lu \\
  Department of Statistics \\
  University of Georgia \\
  Athens, GA 30602 \\
  \texttt{haoran.lu@uga.edu}
  \And
  Luyang Fang \\
  Department of Statistics \\
  University of Georgia \\
  Athens, GA 30602 \\
  \texttt{luyang.fang@uga.edu}
  \And
  Wenxuan Zhong\thanks{Corresponding authors.} \\
  Department of Statistics \\
  University of Georgia \\
  Athens, GA 30602 \\
  \texttt{wenxuan@uga.edu}
  \And
  Ping Ma\footnotemark[1]\\
  Department of Statistics \\
  University of Georgia \\
  Athens, GA 30602 \\
  \texttt{pingma@uga.edu}
}

\begin{document}

\makeatletter
\renewcommand{\@notice}{}
\makeatother

\maketitle

\begin{abstract}
Multi-agent language systems are often built as hand-designed workflows, where agents are assigned semantic roles and communication protocols are specified in advance. We propose \textbf{NeuroMAS}, a method that first treats a multi-agent language system as a trainable and scalable neural-network-like architecture with LLM agents as nodes and intermediate textual signals as edges. In NeuroMAS, agent nodes are role-free but structure-aware: the topology only determines how information can flow in general, while reinforcement learning training determines how nodes communicate, specialize, and coordinate. This formulation shifts multi-agent design from workflow engineering toward architecture design, where depth, width, connectivity, and growth protocol become scalable sources of capability. Further, we provide a theoretical perspective showing why such modular textual computation is more parameter-efficient when tasks admit hierarchical decompositions. Experiments show that NeuroMAS improves significantly over both inference-time and trained multi-agent baselines. We further find that organizational scaling is path-dependent: larger systems can be challenging to train from scratch, but become feasible when grown progressively from smaller trained systems. These results suggest that learned neural multi-agent systems are a promising scaling axis for LLMs.
\end{abstract}

\section{Introduction}

Large language model (LLM)-based agents have become increasingly capable in coding, reasoning, mathematics, writing, and tool-use tasks, as illustrated by recent systems for agentic coding, task automation, and mathematical problem solving~\citep{anthropic2025claudecode,openai2025codex,openclaw2026,nousresearch2026hermes,deepmind2025deepthink}. Despite this progress, current LLM agents still struggle with long-horizon execution, complex reasoning, and domain-specific adaptation~\citep{yao2023react,liu2023agentbench,wang2023voyager}. A common response to these limitations is to scale the underlying model: increase the parameter count, training data, training compute, or inference-time computation~\citep{kaplan2020scaling,brown2020language,hoffmann2022training,openai2023gpt4}. This strategy remains powerful, but it is also increasingly expensive. Empirical scaling laws suggest that gains from larger models are sublinear in compute and model size, so continued improvement often requires disproportionately larger resources~\citep{kaplan2020scaling,hoffmann2022training,sardana2023beyond}. For large frontier-scale backbones, brute-force scaling also increases training and inference cost~\citep{brown2020language,chowdhery2023palm,fedus2022switch}. These trends motivate a complementary question: rather than only scaling the model itself, can we scale the organization of computation around a fixed model?

Multi-agent systems (MAS) provide a natural path toward this alternative axis of scale. Instead of relying on one monolithic generation process, multiple LLM agents can decompose a problem, explore different reasoning paths, exchange intermediate information, and aggregate partial solutions. Related inference-time reasoning methods already show that structuring computation can improve language-model reasoning, for example through chain-of-thought prompting, least-to-most decomposition, tree search, or debate-style interaction~\citep{wei2022chain,zhou2023least,yao2023tree,du2023improving}. LLM-based multi-agent systems (MAS) extend this idea by turning reasoning from a single-agent generation problem into a distributed computation process involving multiple communicating agents~\citep{wu2023autogen,hong2024metagpt,qian2024chatdev}. Recent work further shows that the organization of these systems matters: changing the number of agents, communication pattern, or interaction topology can substantially affect performance~\citep{li2024moreagents,zhuge2024gptswarm,zhou2025mass}. Thus, capability may come not only from the scale of an individual backbone model, but also from how multiple model instances are organized.

However, most existing multi-agent systems still treat organization primarily as a design choice rather than as a scalable object. Prompt-based and workflow-based systems assign agents human-written roles such as planner, solver, critic, verifier, or reviewer, and specify communication protocols by hand~\citep{wu2023autogen,hong2024metagpt,qian2024chatdev,wang2025mixture}. More recent methods optimize parts of this design space, including prompts, communication edges, workflows, or topologies~\citep{zhuge2024gptswarm,zhou2025mass,yang2025agentnet,motwani2024malt}. These methods demonstrate that organization is important, but they usually separate the design of the organization from the training of the agents inside it. In many cases, the workflow is fixed by the designer and learning adapts agents within that scaffold; in others, topology or prompt search is performed separately from end-to-end agent training. This leaves a central gap: we lack a unified formulation in which the multi-agent organization and the trainable agent parameters are optimized as one system.

\begin{figure}[t]
    \centering
    \includegraphics[width=0.99\textwidth]{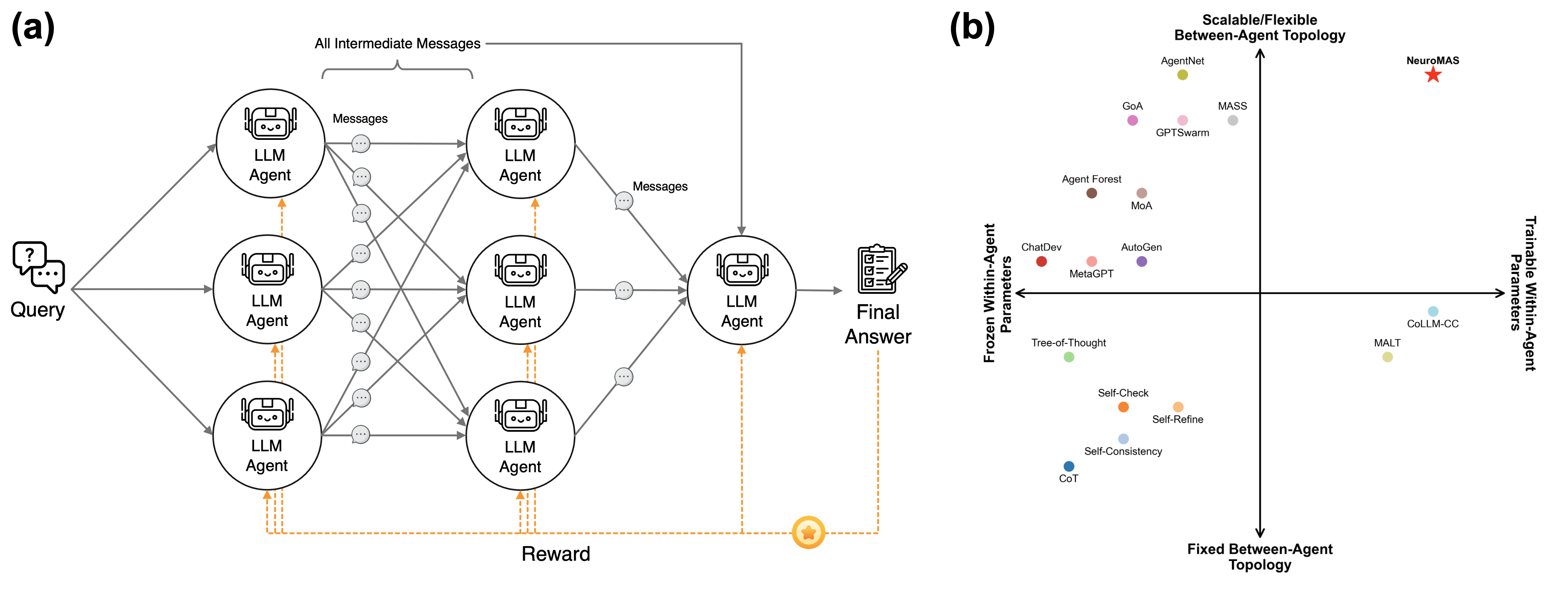}
    \caption{
    (a) Overview of the NeuroMAS Method. (b) Conceptual landscape of LLM multi-agent systems. The horizontal axis represents frozen versus trainable within-agent parameters, and the vertical axis represents fixed versus scalable and/or flexible between-agent topology. Positions are illustrative rather than quantitative.
    }
    \label{fig:neuromas}
    \vspace{-12pt}
\end{figure}
We address this gap with \textbf{Neural Multi-Agent System (NeuroMAS)}, a framework that treats a multi-agent language system as a trainable and scalable architecture with textual intermediate signals. As illustrated in Figure~\ref{fig:neuromas}(a), NeuroMAS is a system in which LLM agent nodes are connected by text-carrying edges. Each node receives the task input together with messages from upstream nodes, generates textual outputs for downstream nodes, and is optimized jointly with the rest of the nodes using the reward from final tasks. Unlike conventional multi-agent workflows, NeuroMAS does not assign nodes with hand-written semantic roles such as planner, critic, verifier, or judge. Instead, nodes receive minimal role information by being \textbf{role-free but structure-aware}: each node knows its position in the architecture and the format of the messages it must produce, but its functional specialization is not prescribed in advance. The topology defines what information can flow where; training determines how each node uses incoming messages, what it sends forward, and how the system coordinates to produce the final answer.

This formulation shifts MAS design from workflow engineering toward \textbf{neural topology design}. The related existing MAS are summarized conceptually in Figure~\ref{fig:neuromas}(b), which will be discussed in detail in Section~\ref{sec:related_work}, NeuroMAS targets a distinct and underexplored regime: the within-agent LLM backbone models are trainable, while the between-agent topology provides a scalable and flexible architectural structure rather than a fixed workflow. In a conventional workflow, the designer specifies the roles and aggregation rules. In NeuroMAS, the graph plays a role analogous to architecture in neural networks: depth, width, connectivity, and growth protocol determine the space of possible computations, while learning determines the effective behavior of the modules inside that architecture. This view is inspired by the success of architectural design in neural networks, where depth, width, modularity, skip connections, graph-structured computation, and architecture search can strongly shape performance~\citep{lecun1998gradient,he2016deep,vaswani2017attention,shazeer2017outrageously,elsken2019neural}. NeuroMAS brings this important perspective to MAS. The goal is not to hand-code the best agent roles, but to provide a scalable structure in which communication, specialization, and coordination can emerge through reinforcement learning. In this sense, NeuroMAS frames a \textit{Bitter Lesson} perspective for multi-agent systems: durable progress should come from general trainable mechanisms that can scale with computation, rather than from increasingly elaborate human-authored workflows~\citep{sutton2019bitter}.

We evaluate NeuroMAS in a deliberately resource-constrained regime. All agents share a small frozen backbone model, and each node is adapted only through lightweight trainable parameters. This setting is designed to test whether learned organizations can improve LLM reasoning without relying on a stronger base model. Across reasoning and coding benchmarks, NeuroMAS improves over human-engineered collaboration baselines and single-model or fixed-topology trained methods. Parameter-count controls further suggest that the gains are not explained merely by adding more trainable adapter parameters to one model. We also find that organizational scaling is possible but not automatic: larger systems trained from scratch can be unstable, whereas progressively growing a trained smaller system yields stronger and more reliable performance. These results support the view that the organization around a language model can itself become a trainable source of capability.

\paragraph{Contributions.}
This work makes the following major contributions. \textbf{Methodologically}, we introduce NeuroMAS as a step toward treating multi-agent language systems as scalable trainable architectures, rather than as separate prompting, routing, or coordination procedures. In this view, communication and specialization are learned properties of the system, not fixed design choices. \textbf{Algorithmically}, we show that multi-agent systems can be scaled through progressive growth, where a smaller trained architecture is expanded into a larger one while preserving useful learned behavior. \textbf{Theoretically}, we explain why distributing computation across organized interacting agents can require fewer parameters than forcing a single unstructured model to solve the whole problem. \textbf{Empirically}, we show that learned organization produces consistent gains across reasoning, domain-knowledge, and code-generation tasks, outperforming prompt-only collaboration, single-model training controls, and recent trained multi-agent baselines, and revealing that successful multi-agent scaling depends on how the system is grown.

\section{Related Work}
\label{sec:related_work}

\noindent\textbf{Structured inference with a single model.}
Many methods improve LLM reasoning by changing test-time computation around a single frozen model. Chain-of-thought exposes intermediate reasoning, self-consistency samples multiple reasoning paths and aggregates answers, and Tree-of-Thought searches over candidate trajectories~\citep{wei2022chain,wang2023selfconsistency,yao2023tree}. Self-Refine adds feedback-and-revision loops, while Self-Check verifies step-by-step reasoning without external tools or supervision~\citep{madaan2023selfrefine,miao2023selfcheck}. These methods show that structured inference can improve reasoning, but the structure is manually specified and used only at inference time. NeuroMAS instead studies trainable organization around a shared backbone, where multiple language policies exchange textual states within a scalable network architecture.

\noindent\textbf{Engineered multi-agent workflows.}
Another line of work distributes reasoning across multiple LLM agents while keeping model parameters fixed. AutoGen, MetaGPT, and ChatDev use explicit agent identities, role descriptions, hand-designed communication protocols, and software-style workflows~\citep{wu2023autogen,hong2024metagpt,qian2024chatdev}. Mixture-of-Agents and Agent Forest use multiple calls or agents to generate, compare, refine, aggregate, or vote over answers~\citep{wang2025mixture,li2024moreagents}. These systems show that agent organization affects performance, but roles, prompts, edges, and coordination rules are mainly human-specified. NeuroMAS differs by making the agent modules trainable components of the system rather than fixed participants in a hand-authored workflow.

\noindent\textbf{Optimizing agent organization.}
Recent methods reduce manual design by optimizing parts of the multi-agent organization. GPTSwarm optimizes agent graphs, including node prompts and graph connectivity~\citep{zhuge2024gptswarm}. MASS searches over prompts and topologies through staged optimization~\citep{zhou2025mass}. Graph of Agents builds input-dependent collaboration structures for long-context modeling, and AgentNet studies decentralized coordination with dynamically evolving connectivity~\citep{joo2025graph,yang2025agentnet}. These works show that topology and communication are important design variables. However, the optimization usually targets prompts, graphs, routing rules, search procedures, or external controllers, while the underlying LLM agents remain frozen. NeuroMAS is complementary: for a given topology, it trains the agent modules themselves as stochastic language policies, and studies how performance changes across scalable topology families.

\noindent\textbf{Training multi-agent collaboration.}
The closest work directly trains multi-agent reasoning systems. MALT post-trains a sequential generator-verifier-refiner pipeline for mathematical and commonsense reasoning~\citep{motwani2024malt}. CoLLM-CC trains decentralized LLM collaboration with multi-agent actor-critic learning and a centralized critic for sparse or long-horizon rewards~\citep{liu2026learning}. NeuroMAS also trains collaboration, but differs in two key ways. First, its nodes are \textbf{role-free}: they are not assigned semantic identities such as generator, verifier, refiner, critic, or judge. Second, its architecture is not tied to a single fixed role-based pipeline; it can be instantiated with different depths, widths, and growth protocols.

Taken together, these lines of work define the conceptual landscape in Figure~\ref{fig:neuromas}(b). The horizontal axis represents frozen versus trainable within-agent parameters, and the vertical axis represents fixed versus scalable and/or flexible between-agent topology. The placements are illustrative rather than quantitative. NeuroMAS combines trainable within-agent language policies with scalable multi-agent topology design, whereas prior methods mainly engineer frozen-agent workflows, optimize external organizational structures, or train collaboration within fixed role-based scaffolds.

\section{Preliminaries}
\label{sec:preliminaries}

We briefly introduce the notation used to formulate NeuroMAS as a reward-trained language architecture. The key point is that final answers and intermediate messages are discrete text sequences, so the system is optimized at the trajectory level rather than by backpropagating through sampled tokens.

\noindent\textbf{Text policies and task objective.}
Let \(\mathcal V\) denote the token vocabulary, and let \(\mathcal T\) denote the set of all finite token sequences over \(\mathcal V\). We consider text-input, text-output tasks with observed examples
\[
\{(X_i,Y_i)\}_{i=1}^n,\qquad (X_i,Y_i)\in \mathcal T\times \mathcal T,
\]
where \(X_i\) is an input query and \(Y_i\) is the desired answer. A language model with parameters \(\theta\) defines a stochastic policy over output text,
\[
\widehat Y\sim p_\theta(\cdot\mid X).
\]
Under the standard autoregressive factorization,
\[
p_\theta(\widehat Y\mid X)
=
\prod_{t=1}^{S}
p_\theta(\widehat Y_t\mid X,\widehat Y_{<t}),
\]
where \(\widehat Y=(\widehat Y_1,\ldots,\widehat Y_S)\) is the generated sequence and \(\widehat Y_{<t}\) denotes its prefix before token \(t\).

A generated answer is evaluated by a task-specific reward
\[
r(Y,\widehat Y)\in [0,1].
\]
In the reasoning benchmarks considered in this paper, we use binary rewards based on answer correctness. For exact-match tasks,
$
r(Y,\widehat Y)=\mathbf 1\{\widehat Y=Y\},
$
after applying the task-specific answer canonicalization described in the experimental protocol.

For a single language policy, the empirical expected reward objective is
\[
J_n(\theta)
=
\frac{1}{n}\sum_{i=1}^{n}
\mathbb E_{\widehat Y_i\sim p_\theta(\cdot\mid X_i)}
\left[
r(Y_i,\widehat Y_i)
\right].
\]
The observed input-output pairs are treated as fixed, and the randomness comes from the sampled model outputs.

\noindent\textbf{Score-function optimization.}
Because generated text is discrete, the reward cannot be differentiated through the sampled tokens. We therefore use the score-function identity, also known as REINFORCE. For a fixed example \((X,Y)\),
\[
\nabla_\theta
\mathbb E_{\widehat Y\sim p_\theta(\cdot\mid X)}
\left[
r(Y,\widehat Y)
\right]
=
\mathbb E_{\widehat Y\sim p_\theta(\cdot\mid X)}
\left[
r(Y,\widehat Y)
\nabla_\theta \log p_\theta(\widehat Y\mid X)
\right].
\]
Equivalently, with any baseline \(b(X,Y)\) that does not depend on the sampled output \(\widehat Y\),
\[
\nabla_\theta
\mathbb E_{\widehat Y\sim p_\theta(\cdot\mid X)}
\left[
r(Y,\widehat Y)
\right]
=
\mathbb E_{\widehat Y\sim p_\theta(\cdot\mid X)}
\left[
\bigl(r(Y,\widehat Y)-b(X,Y)\bigr)
\nabla_\theta \log p_\theta(\widehat Y\mid X)
\right].
\]
The baseline changes the variance of the gradient estimator but not its expectation.
Implementation details, including the sampled surrogate loss, token-level log-probability computation, and baseline update rule, are provided in Appendix~\ref{app:policy_gradient_implementation}.

\noindent\textbf{From one policy to a system of policies.}
NeuroMAS extends this single-policy objective to a collection of stochastic language-agent nodes. A sampled trajectory now includes not only the final answer \(\widehat Y\), but also the intermediate text generated by hidden nodes. Since these intermediate texts are also discrete, we do not backpropagate through their contents. Instead, the system is optimized through the joint probability of the sampled trajectory: each node receives the same terminal reward \(r(Y,\widehat Y)\), while its parameters are updated according to the log-probability of its own sampled output. The next section defines this multi-node trajectory distribution and the corresponding trainable language architecture.

\section{Method}
\label{sec:method}

NeuroMAS treats a multi-agent language system as a trainable \emph{text-valued neural architecture}. Instead of generating an answer with a single language policy, NeuroMAS composes multiple stochastic language-agent nodes. Each hidden node receives an input context, generates text, and sends textual messages to downstream nodes. The output node aggregates the final messages and generates the answer. The analogy to a neural network is architectural rather than algebraic: information flows through layers of trainable modules, but the hidden states are discrete text sequences rather than continuous vectors.

The method has three defining features. First, nodes are \emph{role-free}: no node is manually assigned a semantic role such as planner, solver, critic, verifier, or judge. Second, nodes are \emph{structure-aware}: each node is informed of its position in the architecture and the message format required by the topology. Third, all node policies are trained from the same terminal reward on the final answer, so any useful specialization or coordination must arise because it improves task performance.

Let \(\tau=(n_1,\ldots,n_L)\) denote a layered topology with \(L\) hidden layers, where \(n_\ell\) is the number of nodes in hidden layer \(\ell\). The topology is followed by a single output aggregation node. Given an input query \(X\), NeuroMAS induces a conditional distribution over final answers,
\[
\widehat Y\sim \pi_{\Theta}(\cdot\mid X;\tau),
\]
where \(\Theta\) denotes all trainable node parameters. The degenerate topology \(\tau=\emptyset\) recovers a single language policy, which corresponds to the single-model reinforcement-learning baseline used in our experiments.

%---------------------------
\subsection{Text-Valued Layered Architecture}
\label{sec:method_architecture}

As illustrated in Figure~\ref{fig:neuromas}(a), NeuroMAS is a feed-forward layered architecture whose edges carry text. We write \(v_{\ell,j}\) for the \(j\)-th hidden node in layer \(\ell\), and \(v_{\mathrm{out}}\) for the output aggregation node.

Each node is a stochastic language policy. For a hidden node \(v_{\ell,j}\), let \(C_{\ell,j}\) denote its input context and let \(Z_{\ell,j}\) denote the generated text:
\[
Z_{\ell,j}
\sim
\pi_{\theta_{\ell,j}}(\cdot\mid C_{\ell,j}),
\qquad
\ell=1,\ldots,L,\quad j=1,\ldots,n_\ell.
\]
The output node similarly generates the final answer
\[
\widehat Y
\sim
\pi_{\theta_{\mathrm{out}}}(\cdot\mid C_{\mathrm{out}}).
\]
Here \(\theta_{\ell,j}\) and \(\theta_{\mathrm{out}}\) are the trainable parameters of the corresponding node policies.

The forward pass proceeds layer by layer. A first-layer node has no incoming messages, so its input context is built from the original query \(X\), its structural position, and the topology \(\tau\). For a later hidden node \(v_{\ell,j}\), the input context contains the original query together with all messages sent to it from the previous layer:
\[
\mathcal I_{\ell,j}
=
\left\{
M_{\ell-1,k\rightarrow \ell,j}
:
k=1,\ldots,n_{\ell-1}
\right\},
\qquad \ell=2,\ldots,L.
\]
A fixed node prompt template converts this information into \(C_{\ell,j}\). The template provides the task input, the node's layer and position, the topology, and the incoming messages, but it does not assign a semantic role such as planner, solver, critic, or verifier.

After generating \(Z_{\ell,j}\), each non-output node sends messages to the next layer. The generated text follows a fixed structured format, and a deterministic parser extracts recipient-specific messages. For \(\ell<L\),
\[
\sigma_{\ell,j}(Z_{\ell,j})
=
\left(
M_{\ell,j\rightarrow \ell+1,1},
\ldots,
M_{\ell,j\rightarrow \ell+1,n_{\ell+1}}
\right),
\]
where \(M_{\ell,j\rightarrow \ell+1,k}\) is the message sent from node \(v_{\ell,j}\) to node \(v_{\ell+1,k}\). For the final hidden layer, the parser extracts the message sent to the output aggregation node:
\[
M_{L,j\rightarrow \mathrm{out}}
=
\sigma^{\mathrm{out}}_{L,j}(Z_{L,j}).
\]
The output node then forms its input context \(C_{\mathrm{out}}\) from the original query \(X\), the topology \(\tau\), and the final-layer messages
\[
\{M_{L,j\rightarrow \mathrm{out}}\}_{j=1}^{n_L},
\]
and generates the final answer \(\widehat Y\).
Thus, a NeuroMAS forward pass can be summarized as a stochastic text-valued composition:
\[
H_0=X,\qquad
H_\ell\sim \Phi_{\theta_\ell}(\cdot\mid X,H_{\ell-1},\tau),
\quad \ell=1,\ldots,L,
\]
followed by
\[
\widehat Y\sim \pi_{\theta_{\mathrm{out}}}(\cdot\mid X,H_L,\tau),
\]
where \(H_\ell=(Z_{\ell,1},\ldots,Z_{\ell,n_\ell})\) denotes the generated texts from all nodes in layer \(\ell\), and \(\Phi_{\theta_\ell}\) denotes the collection of node policies in that layer. This composition emphasizes that the final answer is produced by layered text-valued computation rather than by a single monolithic language policy.

The prompt templates and parsers are fixed implementation components and are described in Appendix~\ref{app:method_details}.

%---------------------------
\subsection{Role-Free but Structure-Aware Nodes}
\label{sec:method_rolefree}

The main design choice in NeuroMAS is to separate \emph{structural position} from \emph{semantic role}. Conventional multi-agent workflows often define agents by roles such as planner, solver, critic, verifier, or judge. NeuroMAS removes these hand-authored role descriptions. A node is specified only by its location in the topology, its incoming messages, and the output format required for downstream routing.

This design makes the architecture role-free but structure-aware. The topology determines what information each node can access and where its output can be sent. Training determines how the node uses that information. Thus, node specialization is not imposed by the prompt designer; it is induced by the interaction between architecture and terminal reward. Two nodes with the same frozen backbone can learn different behavior because they occupy different structural positions, receive different messages, and have different trainable adapters.

%---------------------------
\subsection{Joint Training from Terminal Reward}
\label{sec:method_training}

NeuroMAS communicates through sampled text. Since intermediate messages are discrete, we do not backpropagate through message contents. Instead, we view every node as a stochastic policy and optimize the full architecture using the terminal reward. This extends the single-policy REINFORCE formulation in Section~\ref{sec:preliminaries} to a multi-node computation trace.

For a fixed input query \(X\), let
\[
\xi
=
\left\{
(C_{\ell,j},Z_{\ell,j}):\ell=1,\ldots,L,\ j=1,\ldots,n_\ell
\right\}
\cup
\{(C_{\mathrm{out}},\widehat Y)\}
\]
denote the full sampled computation trace. Since each node input context is determined by the original query and upstream generated messages, the probability of the trace factorizes as
\[
p_{\Theta}(\xi\mid X)
=
\left[
\prod_{\ell=1}^{L}
\prod_{j=1}^{n_\ell}
\pi_{\theta_{\ell,j}}(Z_{\ell,j}\mid C_{\ell,j})
\right]
\pi_{\theta_{\mathrm{out}}}(\widehat Y\mid C_{\mathrm{out}}).
\]

The empirical expected reward objective is
\begin{equation}
\label{eq:neuromas_objective}
J_n(\Theta)
=
\frac{1}{n}
\sum_{i=1}^{n}
\mathbb{E}_{\xi_i\sim p_{\Theta}(\cdot\mid X_i)}
\left[
r(Y_i,\widehat Y_i)
\right],
\end{equation}
where
$
\Theta=\{\theta_{\ell,j}\}_{\ell,j}\cup\{\theta_{\mathrm{out}}\}
$
collects all trainable node parameters.

Applying the score-function identity to the full trace gives REINFORCE-style policy gradients \citep{williams1992reinforce}. For a sampled trajectory on \((X,Y)\), all trainable nodes receive the same terminal reward \(r(Y,\widehat Y)\), but each node updates only its own parameters according to the log-probability of its own sampled output. Specifically, the gradient contribution for hidden node \(v_{\ell,j}\) is
\[
r(Y,\widehat Y)
\nabla_{\theta_{\ell,j}}
\log
\pi_{\theta_{\ell,j}}(Z_{\ell,j}\mid C_{\ell,j}),
\]
and the output node is updated analogously by
$
r(Y,\widehat Y)
\nabla_{\theta_{\mathrm{out}}}
\log
\pi_{\theta_{\mathrm{out}}}(\widehat Y\mid C_{\mathrm{out}}).
$
Thus, the parameters of different nodes are updated separately, but they are optimized under a shared system-level objective and a shared terminal reward. There are no intermediate labels, role-specific rewards, or node-specific supervision signals.
The sampled loss used to implement these node-wise policy-gradient updates is described in Appendix~\ref{app:policy_gradient_implementation}.

This shared-reward training design is consistent with recent multi-agent reinforcement learning approaches for LLM collaboration. For example, \citet{liu2026learning} study centralized and decentralized value-sharing strategies for multi-agent LLM training, and find that sharing a common value signal across agents can be more effective in long-horizon or sparse-reward settings. NeuroMAS follows the same cooperative principle in a simpler form: multiple nodes are trained from a shared global reward rather than from separately engineered node-level rewards.

%---------------------------
\paragraph{Parameter-Efficient Node Instantiation}
\label{sec:method_parameterization}

The NeuroMAS formulation does not require a particular parameterization of the node policies. In principle, each node could have an independent language model. In our experiments, however, all nodes share the same frozen backbone model and differ only through node-specific LoRA adapters~\citep{hu2022lora}. This keeps training feasible while allowing nodes to specialize according to their structural positions.

Let \(\pi_0\) denote the shared frozen backbone. For each node \(v\), we attach a node-specific LoRA adapter with parameters \(\alpha_v\). The node policy is
\[
\pi_v(\cdot\mid C_v)
=
\pi_0(\cdot\mid C_v;\alpha_v),
\]
where the backbone parameters are fixed and only \(\alpha_v\) is trained. The output aggregation node also has its own adapter. Thus, a topology \(\tau=(n_1,\ldots,n_L)\) uses \(1+\sum_{\ell=1}^{L}n_\ell\) node-specific adapters and the same number of LLM calls per forward pass. Further implementation details are provided in Appendix~\ref{app:parameterization_details}.
%---------------------------
\subsection{Progressive Growth}
\label{sec:method_growth}

We propose progressive growth as the scaling protocol used to expand a trained NeuroMAS topology into a larger one. Rather than initializing the larger system entirely from scratch, we reuse the trainable parameters of nodes that have corresponding positions in the expanded topology and initialize only the newly added nodes.

Suppose a trained topology \(\tau\) is expanded to a larger topology \(\tau'\). For each node in \(\tau\) that is retained in \(\tau'\), we copy its trainable parameters to the corresponding node in the expanded architecture. Newly added nodes are initialized separately. The expanded system is then trained further using the same terminal-reward objective in Equation~\eqref{eq:neuromas_objective}. In the LoRA implementation, reusing a node means copying its node-specific adapter parameters.

This protocol preserves the learned behavior of the smaller topology while giving the larger topology additional capacity for communication and specialization. 
% Detailed choices for which nodes are inherited, how new nodes are initialized, and how training continues are provided in Appendix~\ref{app:method_details}.
Detailed choices for inherited nodes, new-node initialization, and continued training are provided in Appendix~\ref{app:growth_schedule}.

\section{Theoretical analysis from a compositional-efficiency perspective}
\label{sec:theory}

We now give a theoretical perspective on why a trainable multi-agent architecture can be parameter-efficient. The argument is not a training guarantee and does not claim that every multi-agent system is better than every single-agent system. Instead, it isolates a representational advantage: when a task admits a stable hierarchical decomposition, a modular architecture can allocate parameters to simpler local components and compose their outputs, whereas an unstructured single policy must represent the full solution process at once.

This perspective parallels classical results showing that neural networks can approximate compositional function classes more efficiently when their architecture exposes the underlying hierarchical composition structure~\citep{bauer2019deep,fan2024factor}. NeuroMAS provides an analogous architecture for language generation: width allows different nodes to process different subproblems, depth allows intermediate messages to be combined or refined, and the output node aggregates the resulting textual states.

\paragraph{Fixed-input exact-match error.}

We use the notation from Section~\ref{sec:preliminaries}. Consider a fixed input
$
X\in\mathcal T,
$
with a unique desired answer
$
Y^*(X)\in\mathcal T.
$
A conditional generator \(\pi(\cdot\mid X)\) samples a candidate answer \(\widehat Y\). We evaluate it by exact-match error,
\[
\mathcal E_X(\pi)
=
\Pr_{\widehat Y\sim \pi(\cdot\mid X)}
\left(
\widehat Y\neq Y^*(X)
\right).
\]
This fixed-input formulation keeps the analysis conditional on a given problem instance. The result below is a capacity statement in terms of trainable parameter budget, not a sample-complexity or optimization theorem.

Let \(\Pi_{\mathrm{multi},q}\) denote a class of modular multi-agent generators with at most \(q\) trainable parameters, and let \(\Pi_{\mathrm{single},q}\) denote a class of unstructured single-agent generators with at most \(q\) trainable parameters. Define the smallest achievable exact-match errors
\[
\varepsilon_{\mathrm{multi},q}
=
\inf_{\pi\in\Pi_{\mathrm{multi},q}}
\mathcal E_X(\pi),
\qquad
\varepsilon_{\mathrm{single},q}
=
\inf_{\pi\in\Pi_{\mathrm{single},q}}
\mathcal E_X(\pi).
\]
Thus, \(\mathcal E_X(\pi)\) is the error of one generator, while \(\varepsilon_{\mathrm{multi},q}\) and \(\varepsilon_{\mathrm{single},q}\) are the best errors achievable by the corresponding architecture classes under the same trainable parameter budget.

We assume that the ideal solution process for \(X\) has a useful hierarchical decomposition.
\begin{assumption}[Hierarchical compositional solution]
\label{ass:compositional_solution}
The ideal generator \(\pi^*\) can be represented by composing \(K\) simpler generative components
\[
\psi_1^*,\ldots,\psi_K^*,
\]
arranged in a parallel or hierarchical computation graph.
\end{assumption}

Each component may correspond to a subquestion, an intermediate computation, an aggregation step, or a refinement step. This assumption is natural for many reasoning, planning, mathematical, and coding tasks, but may fail for tasks that require one indivisible global judgment.

\begin{assumption}[Local learnability]
\label{ass:local_learnability}
For component \(i\), let \(\eta_i(q_i)\) be the smallest component-level exact-match error achievable with \(q_i\) trainable parameters. There exist constants \(C_{\mathrm{loc}}>0\) and \(a>0\) such that
\[
\eta_i(q_i)
\leq
C_{\mathrm{loc}}q_i^{-1/a}
\]
for all \(i=1,\ldots,K\).
\end{assumption}

The exponent \(a\) measures the effective complexity of the local subtasks. Smaller \(a\) corresponds to easier local approximation. The assumption says that decomposed components are learnable with increasing parameter budget.

\begin{assumption}[Stable composition]
\label{ass:stable_composition}
There exists a constant \(C_{\mathrm{stab}}>0\) such that, if component \(i\) is approximated with error at most \(e_i\), then the composed generator has final exact-match error at most
$
C_{\mathrm{stab}}\sum_{i=1}^{K} e_i.
$
\end{assumption}

This assumption requires local errors not to amplify too quickly when components are composed. It is appropriate for stable aggregation or refinement, but may fail when a single early mistake can derail the entire solution.

\begin{theorem}[Compositional parameter-efficiency bound]
\label{thm:hierarchical_composition_bound}
Suppose Assumptions~\ref{ass:compositional_solution}--\ref{ass:stable_composition} hold. Then the modular multi-agent class satisfies
\[
\varepsilon_{\mathrm{multi},q}
=
O\left(K^{1+1/a}q^{-1/a}\right).
\]
\end{theorem}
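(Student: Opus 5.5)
The bound comes from splitting the budget $q$ equally across the $K$ components of the decomposition in Assumption~\ref{ass:compositional_solution}, applying Assumption~\ref{ass:local_learnability} to each component, and aggregating the errors with Assumption~\ref{ass:stable_composition}. The implicit modelling premise, which I will state explicitly, is that $\Pi_{\mathrm{multi},q}$ is rich enough to realize the computation graph of $\pi^*$. Concretely, it contains a generator with one node per component $\psi_i^*$, wired according to that graph, with node $i$ given a trainable budget $q_i$, as long as $\sum_i q_i\le q$. With node-specific adapters as in Section~\ref{sec:method_parameterization} this is natural, since the budget is a sum of per-node budgets. I would also treat $K$, $C_{\mathrm{loc}}$, $C_{\mathrm{stab}}$ and $a$ as fixed constants that do not depend on $q$.

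\textbf{Step 1 (allocation).} Set $q_i=\lfloor q/K\rfloor$ for each $i$, assuming $q\ge 2K$ so that $q_i\ge q/(2K)$. Small $q$ is absorbed into the $O(\cdot)$ because the error is always at most $1$.

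\textbf{Step 2 (local approximation).} By the definition of $\eta_i$ as an infimum and Assumption~\ref{ass:local_learnability}, for any $\delta>0$ there is a component generator $\widehat\psi_i$ with $q_i$ parameters whose component error satisfies
\[
e_i\le \eta_i(q_i)+\delta/K\le C_{\mathrm{loc}}(q/(2K))^{-1/a}+\delta/K .
\]

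\textbf{Step 3 (composition).} Plugging the $\widehat\psi_i$ into the graph gives some $\widehat\pi\in\Pi_{\mathrm{multi},q}$. Assumption~\ref{ass:stable_composition} then yields
\[
\mathcal E_X(\widehat\pi)\le C_{\mathrm{stab}}\sum_{i=1}^K e_i\le C_{\mathrm{stab}}C_{\mathrm{loc}}\,2^{1/a}K\cdot K^{1/a}q^{-1/a}+C_{\mathrm{stab}}\delta .
\]
Taking the infimum over the class and letting $\delta\to0$ gives
\[
\varepsilon_{\mathrm{multi},q}\le 2^{1/a}C_{\mathrm{stab}}C_{\mathrm{loc}}K^{1+1/a}q^{-1/a},
\]
which is the claimed rate. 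As a sanity check, the equal split is essentially optimal here. Minimizing $\sum_i q_i^{-1/a}$ subject to $\sum_i q_i=q$ is a convex problem, and by symmetry its optimum is the equal split, so the $K^{1+1/a}$ factor is intrinsic to this argument rather than slack.

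\textbf{Main obstacle.} The arithmetic is routine. The delicate step is making the modelling premise rigorous. Specifically, the composed object built from the near-optimal local components must lie in $\Pi_{\mathrm{multi},q}$, and Assumption~\ref{ass:stable_composition} must apply to exactly this substitution. Two points need care here. First, any fixed routing or parsing overhead should be counted as non-trainable, so it does not consume the budget. Second, "component $i$ approximated with error at most $e_i$" must be read as the component-level exact-match error that $\eta_i$ measures. I would settle both points by stating them explicitly as definitional conventions before running the three steps above.
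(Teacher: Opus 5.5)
Your proposal is correct and follows the paper's proof: one node per component $\psi_i^*$, an even split $q_i\approx q/K$, Assumption~\ref{ass:local_learnability} applied to each component, Assumption~\ref{ass:stable_composition} to sum the $K$ errors, and a final infimum over $\Pi_{\mathrm{multi},q}$. Your floor and $\delta$-slack bookkeeping and your explicit membership premise only tighten details the paper glosses over; they cost a harmless constant factor $2^{1/a}$.
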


The theorem shows that a modular architecture pays a polynomial overhead in the number of components \(K\), but its error decreases according to the local subtask complexity exponent \(a\). The proof is given in Appendix~\ref{app:proof}. The intuition is simple: allocate approximately \(q/K\) parameters to each component. By Assumption~\ref{ass:local_learnability}, each component then has error on the order of \((q/K)^{-1/a}\). By Assumption~\ref{ass:stable_composition}, composing the \(K\) components gives total error on the order of
\[
K(q/K)^{-1/a}
=
K^{1+1/a}q^{-1/a}.
\]

To compare with a single-agent generator, we add a condition that the unstructured global problem is harder than the local components.

\begin{assumption}[Unstructured global difficulty]
\label{ass:global_difficulty}
For the corresponding single-agent class, there exist constants \(c_{\mathrm{single}}>0\) and \(b>a\) such that, for sufficiently large \(q\),
\[
\varepsilon_{\mathrm{single},q}
\geq
c_{\mathrm{single}}q^{-1/b}.
\]
\end{assumption}

This is a comparison assumption, not a universal claim about all single models. It states that if the compositional structure is not exposed, the effective approximation exponent is worse.

\begin{corollary}[Comparison with a single-agent generator]
\label{cor:single_comparison}
Suppose Assumptions~\ref{ass:compositional_solution}--\ref{ass:global_difficulty} hold, and \(K\) is fixed. Then there exists \(q_0>0\) such that, for all \(q\geq q_0\),
\[
\varepsilon_{\mathrm{multi},q}
<
\varepsilon_{\mathrm{single},q}.
\]
That is, under the stated compositional conditions, the best modular multi-agent generator achieves smaller exact-match error than the best unstructured single-agent generator with the same sufficiently large trainable parameter budget.
\end{corollary}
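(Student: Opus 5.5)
The plan is to combine the upper bound of Theorem~\ref{thm:hierarchical_composition_bound} with the lower bound of Assumption~\ref{ass:global_difficulty}, and then compare the two power laws in $q$. Since $K$ is fixed, the factor $K^{1+1/a}$ is a constant. So Theorem~\ref{thm:hierarchical_composition_bound} gives explicit constants $C_{\mathrm{multi}}>0$ and $q_1>0$ such that, for all $q\ge q_1$,
\[
\varepsilon_{\mathrm{multi},q}\le C_{\mathrm{multi}}\,q^{-1/a}.
\]
I will extract these constants from the allocation in the proof of the theorem. Take $q_i=\lfloor q/K\rfloor\ge q/(2K)$ once $q\ge 2K$. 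Assumption~\ref{ass:local_learnability} then gives $\eta_i\le C_{\mathrm{loc}}(2K/q)^{1/a}$. Assumption~\ref{ass:stable_composition} then gives
\[
\varepsilon_{\mathrm{multi},q}\le C_{\mathrm{stab}}C_{\mathrm{loc}}K(2K)^{1/a}q^{-1/a}.
\]

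The key step is a small infimum argument. By the definition of $\eta_i(q_i)$ as an infimum, for any $\delta>0$ there are component approximations with errors at most $\eta_i(q_i)+\delta$. Composing them gives a member of $\Pi_{\mathrm{multi},q}$ whose error is at most the bound above plus $C_{\mathrm{stab}}K\delta$. Letting $\delta\to 0$ yields the stated bound on $\varepsilon_{\mathrm{multi},q}$. This step silently uses the fact that the modular class $\Pi_{\mathrm{multi},q}$ contains every such composition with $\sum_i q_i\le q$. That is the implicit content of Assumption~\ref{ass:compositional_solution} together with the definition of the class.

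Next, Assumption~\ref{ass:global_difficulty} gives $q_2$ such that $\varepsilon_{\mathrm{single},q}\ge c_{\mathrm{single}}q^{-1/b}$ for $q\ge q_2$. The comparison then reduces to finding when
\[
C_{\mathrm{multi}}\,q^{-1/a}<c_{\mathrm{single}}\,q^{-1/b},
\qquad\text{equivalently}\qquad
q^{1/a-1/b}>C_{\mathrm{multi}}/c_{\mathrm{single}}.
\]
Since $b>a>0$, the exponent $1/a-1/b$ is strictly positive. So this inequality holds for all
\[
q>q_3:=\bigl(C_{\mathrm{multi}}/c_{\mathrm{single}}\bigr)^{1/(1/a-1/b)}.
\]
Setting $q_0=\max(q_1,q_2,q_3)+1$ gives the strict inequality $\varepsilon_{\mathrm{multi},q}<\varepsilon_{\mathrm{single},q}$ for all $q\ge q_0$.

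The main obstacle is turning the $O(\cdot)$ in Theorem~\ref{thm:hierarchical_composition_bound} into a bound that holds for every $q$ beyond a threshold, with an explicit constant. The asymptotic statement by itself is not enough for a strict inequality at each $q\ge q_0$. I would handle this by rederiving the constant from the allocation as above, rather than quoting the $O$-bound.

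The comparison itself is elementary once both bounds are explicit: it is just two power laws whose exponents differ. The integer-rounding issue is minor, and it is handled by the condition $q\ge 2K$ imposed through $q_1$.
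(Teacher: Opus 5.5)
Your proposal is correct and follows essentially the same route as the paper: it takes the explicit upper bound on $\varepsilon_{\mathrm{multi},q}$ from the allocation in the proof of Theorem~\ref{thm:hierarchical_composition_bound} and the lower bound from Assumption~\ref{ass:global_difficulty}, then uses $1/a>1/b$ to make the power-law ratio fall below one beyond some $q_0$. Your floor-rounding and $\delta$-infimum refinements only change the constant (the paper takes $q_i=q/K$ and treats the local errors as attained). Also, for fixed $K$ the $O$-bound by definition already gives a constant and a threshold, so rederiving them is extra care rather than a necessity.
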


The fixed-error view is equivalent. Let \(q_{\mathrm{multi}}(\delta)\) and \(q_{\mathrm{single}}(\delta)\) denote the minimum trainable parameter budgets required to reach exact-match error at most \(\delta\). Under the same assumptions,
\[
q_{\mathrm{multi}}(\delta)
=
O\left(K^{1+a}\delta^{-a}\right),
\qquad
q_{\mathrm{single}}(\delta)
=
\Omega\left(\delta^{-b}\right).
\]
Since \(b>a\), the modular generator requires fewer trainable parameters for sufficiently small target error \(\delta\), up to constants and the polynomial overhead in \(K\). The proof is given in Appendix~\ref{app:proof}.

\paragraph{Implications for NeuroMAS.}

This result explains why NeuroMAS can be parameter-efficient when a task has useful compositional structure. A wider topology can allocate different nodes to different intermediate computations, while a deeper topology can combine and refine textual states across layers. Importantly, the theorem does not require these nodes to be assigned semantic roles by hand. The graph only needs to expose a structure through which specialization is possible; training may then induce useful node behavior through the shared terminal reward.

The limitations of the result are equally important. It is an architectural capacity statement, not a guarantee that the empirical algorithm will discover the best decomposition. The advantage may disappear if the task has no useful compositional structure, if the chosen topology fails to expose that structure, if the parameter budget is split too aggressively across weak nodes, or if a single model internally learns the same decomposition. The result also concerns trainable parameter complexity rather than inference latency or total computation. These caveats motivate the empirical study in the next section, where we test whether learned organization provides gains beyond single-model reinforcement learning and parameter-count-matched controls.
\section{Experiments}
\label{sec:experiments}

We design the experiments to evaluate three aspects of NeuroMAS. First, we test whether learned multi-agent organization improves over fixed-backbone collaboration methods and single-model reinforcement learning across reasoning, domain knowledge, and code-generation tasks. Second, we examine whether this advantage is robust to changing the backbone from Qwen3-0.6B to Gemma-3-1B-IT. Third, we study organizational scaling and show that larger topologies do not improve automatically, but can become effective when grown progressively from smaller trained systems.

\subsection{Experimental Setup}
\label{sec:exp_setup}

\paragraph{Backbone models and training.}
We use two frozen backbone models in our experiments: Qwen3-0.6B~\citep{yang2025qwen3} for the main experiments and Gemma-3-1B-IT~\citep{gemma2025gemma3} for the backbone-robustness study. In all experiments, the backbone parameters are kept frozen. Trainable parameters are introduced through LoRA adapters. Trained methods are optimized with REINFORCE using the reward defined in Section~\ref{sec:preliminaries}. Evaluation uses greedy decoding, and accuracy is measured after canonicalizing outputs into the expected answer format. Additional implementation details are provided in Appendix~\ref{app:exp-details}.

\paragraph{NeuroMAS settings.}
We write \(\text{NeuroMAS-}c\) for a NeuroMAS topology that uses \(c\) total language-model calls per forward pass, including the output aggregation node. Equivalently, if the hidden topology is \(\tau=(n_1,\ldots,n_L)\), then \(c = 1+\sum_{\ell=1}^{L} n_\ell\). Thus, NeuroMAS-3 corresponds to \(\tau=[1,1]\), with two hidden nodes and one output node; NeuroMAS-5 corresponds to \(\tau=[2,2]\); and NeuroMAS-7 corresponds to \(\tau=[2,2,2]\). Each node has its own LoRA adapter, so the number of adapters equals the number of LLM calls.

\paragraph{Benchmarks.}
We evaluate on six tasks spanning reasoning and coding: ARC-Challenge for science question answering~\citep{clark2018arc},
Navigate for spatial navigation reasoning from the BIG-Bench Hard problem set~\citep{suzgun2022bbh},
three subjects from Massive Multitask Language Understanding (MMLU), including Abstract Algebra for mathematical reasoning,
College Physics for physical reasoning, and Professional Medicine for biomedical knowledge~\citep{hendrycks2021mmlu},
and HumanEval for code generation~\citep{chen2021humaneval}. This suite places the relatively weak backbone in a nontrivial operating regime:
direct prompting is far from saturated, but the model remains capable enough for reward-based improvement to be measurable.

\paragraph{Baselines.}
We compare against two groups of methods. The first group consists of fixed-backbone baselines, where the underlying LLM parameters remain frozen: direct prompting, Self-Refine~\citep{madaan2023selfrefine}, Self-Check~\citep{miao2023selfcheck}, MoA~\citep{wang2025mixture}, GoA~\citep{joo2025graph}, AgentNet~\citep{yang2025agentnet}, and GPTSwarm~\citep{zhuge2024gptswarm}. These baselines test whether NeuroMAS improves beyond prompt-based, self-refinement, and graph-based collaboration without updating model parameters. The second group consists of trained-backbone baselines, where model parameters are updated during training: Single-LLM RL, MALT~\citep{motwani2024malt}, CoLLM-CC~\citep{liu2026learning}, and NeuroMAS variants. Single-LLM RL uses the same REINFORCE training procedure as NeuroMAS but with a single LLM agent, and can be viewed as the one-agent version of NeuroMAS. These baselines test whether the gain comes from scalable multi-agent organization rather than reinforcement learning alone.

\subsection{Main Results}
\label{sec:exp_main}

Table~\ref{tab:main} reports accuracy and computational cost. The most direct NeuroMAS comparison is NeuroMAS-3, which uses two hidden nodes, one output node, three LoRA adapters, and three LLM calls per example.

\begin{table*}[t]
\centering
\fontsize{8.7pt}{10pt}\selectfont
\setlength{\tabcolsep}{3.0pt}
\caption{
Accuracy and computational cost across reasoning, domain knowledge, and code-generation benchmarks.
NeuroMAS-\(c\) denotes a NeuroMAS topology with \(c\) agents, including the output aggregation agent.
LLM Calls reports the number of model invocations used per input example.
}
\label{tab:main}
\begin{tabular}{lcccccccc}
\toprule
& \multicolumn{6}{c}{Accuracy (\%)} & \multicolumn{2}{c}{Computation Cost} \\
\cmidrule(lr){2-7} \cmidrule(lr){8-9}
Method & ARC & Navigate & Algebra & Physics & Medicine & HumanEval & Trainable Params & LLM Calls \\
\midrule
\multicolumn{9}{l}{\textit{Fixed backbone models}} \\
Direct prompting & 24.5 & 40.5 & 22.0 & 17.6 & 24.0 & 11.0 & 0 & 1 \\
Self-Refine & 37.5 & 40.5 & 25.0 & 24.5 & 37.5 & 21.3 & 0 & 2 \\
Self-Check & 38.5 & 44.5 & 26.0 & 22.5 & 32.5 & 24.4 & 0 & 2 \\
MoA & 34.5 & 44.5 & 24.0 & 22.5 & 40.5 & 12.2 & 0 & 5 \\
GoA & 39.5 & 40.5 & 28.0 & 24.5 & 34.0 & 20.7 & 0 & 3 \\
GPTSwarm & 46.0 & 40.5 & 25.0 & 25.5 & 43.5 & 12.8 & 0 & 11 \\
AgentNet & 30.5 & 40.5 & 35.0 & 24.0 & 39.0 & 18.3 & 0 & 11 \\
\midrule
\multicolumn{9}{l}{\textit{Trained backbone models}} \\
MALT & 53.5 & 43.0 & 35.0 & 31.4 & 38.0 & 29.3 & 6.9M & 3 \\
CoLLM-CC & 55.0 & 40.0 & 41.0 & 35.3 & 33.0 & 29.5 & 4.6M & 2 \\
Single-LLM RL & 46.0 & 42.2 & 29.0 & 24.5 & 43.0 & 17.7 & 6.9M & 1 \\
NeuroMAS-3 & \textbf{56.5} & 45.5 & 39.0 & \textbf{44.1} & \textbf{48.0} & 30.5 & 6.9M & 3 \\
NeuroMAS-5 & 54.0 & 48.0 & 39.0 & 39.0 & 41.5 & 29.9 & 11.5M & 5 \\
NeuroMAS-7 & 53.5 & \textbf{51.0} & \textbf{42.0} & 39.5 & 43.5 & \textbf{31.7} & 16.1M & 7 \\
\bottomrule
\end{tabular}
\end{table*}

NeuroMAS-3 outperforms the strongest fixed-backbone baseline on all six benchmarks. Compared with the best fixed-backbone result in each column, NeuroMAS-3 improves by 10.5 percentage points on ARC, 1.0 on Navigate, 4.0 on Algebra, 18.6 on Physics, 4.5 on Professional Medicine, and 6.1 on HumanEval. Because the fixed-backbone baselines use prompting, refinement, sampling, or graph-based orchestration without updating model parameters, these gains show that NeuroMAS provides benefits beyond repeatedly querying the same frozen model.

NeuroMAS-3 also outperforms the single-model RL control on every benchmark in Table~\ref{tab:main}. This is a key matched comparison: Single-LLM RL and NeuroMAS-3 use the same frozen backbone, the same reinforcement-learning objective, and the same trainable-parameter budget of 6.9M LoRA parameters. The improvements are 10.5 percentage points on ARC, 3.3 on Navigate, 10.0 on Algebra, 19.6 on Physics, 5.0 on Professional Medicine, and 12.8 on HumanEval. The difference lies in the organization of computation: Single-LLM RL uses one model call, whereas NeuroMAS-3 distributes computation across two hidden nodes and one output node. This parameter-matched improvement therefore supports the claim that learned multi-agent organization contributes capability beyond reinforcement learning alone.

Among trained baselines with comparable call budgets, NeuroMAS-3 obtains the strongest results on ARC, Navigate, Physics, Professional Medicine, and HumanEval. CoLLM-CC performs better on Algebra than NeuroMAS-3, while NeuroMAS-7 achieves the best Algebra result among all reported methods. Across the full table, a NeuroMAS variant obtains the best result on every benchmark: NeuroMAS-3 on ARC, Physics, and Professional Medicine, and NeuroMAS-7 on Navigate, Algebra, and HumanEval. We therefore do not interpret Table~\ref{tab:main} as showing that one fixed topology universally dominates. The consistent pattern is instead that NeuroMAS improves over prompt-only collaboration and single-model RL, while larger NeuroMAS topologies can provide further gains when organizational scaling is successfully optimized.

\subsection{Backbone Robustness}
\label{sec:backbone_robustness}

To test whether the gains of NeuroMAS depend on a specific backbone, we repeat the ARC-Challenge evaluation using Gemma-3-1B-IT~\citep{gemma2025gemma3}. Table~\ref{tab:main_gemma} compares fixed-backbone methods, which keep the underlying LLM parameters frozen, with trained-backbone methods, which optimize trainable adapters during post-training.

\begin{table}[t]
\centering
\small
\setlength{\tabcolsep}{6pt}
\caption{Backbone robustness results on ARC-Challenge using Gemma-3-1B-IT as the backbone~\citep{gemma2025gemma3}. Fixed-backbone methods do not update the underlying LLM parameters, while trained-backbone methods optimize trainable adapters during post-training.}
\vspace{6pt}
\label{tab:main_gemma}
\begin{tabular}{lccc}
\toprule
Method & ARC (\%) & Trainable Params & LLM Calls \\
\midrule
\multicolumn{4}{l}{\textit{Fixed backbone models}} \\
Direct prompting & 31.0 & 0 & 1 \\
Self-Refine & 41.0 & 0 & 2 \\
Self-Check & 37.5 & 0 & 2 \\
MoA & 34.5 & 0 & 5 \\
GoA & 38.5 & 0 & 3 \\
GPTSwarm & 43.0 & 0 & 11 \\
AgentNet & 36.5 & 0 & 11 \\
\midrule
\multicolumn{4}{l}{\textit{Trained backbone models}} \\
MALT & 35.0 & 4.5M & 3 \\
CoLLM-CC & 40.5 & 3.0M & 2 \\
Single-LLM RL & 36.0 & 4.5M & 1 \\
NeuroMAS-3 & \textbf{44.5} & 4.5M & 3 \\
NeuroMAS-5 & 44.0 & 7.5M & 5 \\
NeuroMAS-7 & 41.0 & 10.4M & 7 \\
\bottomrule
\end{tabular}
\end{table}

NeuroMAS remains effective under the Gemma backbone. Among fixed-backbone methods, GPTSwarm obtains the strongest accuracy at 43.0\%, but requires 11 LLM calls. Self-Refine is also competitive, reaching 41.0\% accuracy with two calls. NeuroMAS-3 reaches 44.5\% accuracy with three LLM calls, outperforming the strongest fixed-backbone baseline while using substantially fewer calls than GPTSwarm.

Among trained-backbone methods, NeuroMAS-3 achieves the best result. It outperforms MALT by 9.5 percentage points, CoLLM-CC by 4.0 points, and Single-LLM RL by 8.5 points. NeuroMAS-5 remains close at 44.0\%, indicating that the result is not isolated to one topology. These results suggest that the advantage of learned multi-agent organization is not specific to Qwen3-0.6B, and also appears when the backbone is changed to Gemma-3-1B-IT.

Increasing the number of agents does not monotonically improve performance in this setting. NeuroMAS-5 remains close to NeuroMAS-3, while NeuroMAS-7 drops to 41.0\%. This pattern is consistent with the main results: larger organizations can help, but scaling depends on the interaction among topology, optimization, and training protocol.

\subsection{Ablation Study: Progressive Growth}
\label{sec:exp_growth}

We conduct an ablation study to examine whether larger NeuroMAS topologies benefit from progressive growth rather than from increased topology size alone. Table~\ref{tab:growth} compares two training protocols on Navigate, where the full sequence of NeuroMAS-3, NeuroMAS-5, and NeuroMAS-7 is evaluated. The from-scratch protocol initializes each topology directly, whereas progressive growth expands a trained smaller topology by transferring inherited adapters, initializing newly added nodes separately, and continuing training with the same terminal-reward objective.

\begin{table}[h]
\centering
\normalsize
\caption{
Ablation of progressive growth on Navigate. From-scratch training initializes each topology directly. Progressive growth expands a trained smaller topology by transferring inherited adapters, initializing new nodes separately, and continuing training under the same terminal-reward objective.
}
\vspace{6pt}
\begin{tabular}{lcc}
\toprule
Method & From Scratch & Progressive Growth \\
\midrule
NeuroMAS-3 & 45.5 & 45.5 \\
NeuroMAS-5 & 41.0 & 48.0 \\
NeuroMAS-7 & 40.5 & 51.0 \\
\bottomrule
\end{tabular}
\label{tab:growth}
\end{table}

The ablation shows that larger topology size alone does not guarantee better performance. When trained from scratch, accuracy drops from 45.5\% for NeuroMAS-3 to 41.0\% for NeuroMAS-5 and 40.5\% for NeuroMAS-7. In contrast, progressive growth improves performance from 45.5\% to 48.0\% and then to 51.0\%. The gains are substantial for larger systems: progressive growth improves NeuroMAS-5 by 7.0 percentage points and NeuroMAS-7 by 10.5 percentage points over from-scratch training. These results suggest that scaling NeuroMAS depends not only on the final topology, but also on the optimization path used to reach it. Progressive growth provides a stable learned core around which larger organizations can be expanded.

\paragraph{Summary of findings.}
The experiments support three conclusions. First, learned multi-agent organization improves over prompt-engineered collaboration and single-model reinforcement learning in a weak-backbone regime. Second, the parameter-matched comparison with Single-LLM RL in Table~\ref{tab:main} suggests that the gains are not merely due to adding more LoRA parameters. Third, the progressive-growth ablation shows that organizational scaling is path-dependent: larger systems can degrade when trained from scratch, but improve when expanded from smaller trained systems.

\section{Discussion}
\label{sec:discussion}

We introduced NeuroMAS, a framework that treats a multi-agent system as a trainable architecture with textual intermediate messages. Rather than assigning agents hand-written semantic roles, NeuroMAS uses role-free but structure-aware nodes connected by text-carrying edges. Each node is adapted through lightweight trainable parameters, and the entire system is optimized at the trajectory level from the same terminal reward on the final answer. In a weak-backbone reasoning regime, this design improves over prompt-only collaboration and single-model reinforcement learning, suggesting that the organization around a language model can itself be a source of capability.

The experiments support three main conclusions. First, effective multi-agent behavior does not require manually specified roles such as planner, solver, critic, or verifier. A layered architecture with structurally positioned nodes is sufficient to induce useful specialization through reward-based training. Second, the gains are not explained by trainable parameter count alone: a parameter-matched single-model adapter remains weaker than NeuroMAS-3 on the dedicated parameter-control tasks. This suggests that distributing computation across interacting modules is not equivalent to placing the same adapter budget into one larger single-model policy. Third, organizational scaling is path-dependent. Larger topologies can degrade when trained from scratch, but improve when grown progressively from a smaller trained system. Thus, scaling a multi-agent organization is not merely a question of adding more nodes; it is also an optimization problem.

These findings point to a broader view of multi-agent language systems. Existing MAS methods often treat organization as workflow engineering: roles, communication rules, and aggregation mechanisms are designed by humans. NeuroMAS instead treats organization as an architectural object. The topology defines possible routes of information flow, while training determines how nodes use those routes. This perspective makes multi-agent systems closer to neural architectures, where depth, width, connectivity, and growth protocol shape the computation performed by the model.

The results also clarify what NeuroMAS does not claim. The framework is not evidence that more agents are always better. The advantage depends on the task having useful decomposable structure, the topology exposing that structure, and optimization successfully inducing coordination among nodes. The current experiments use a small frozen backbone, exact-match reasoning benchmarks, and relatively small evaluation subsets. In addition, NeuroMAS increases inference cost because each forward pass requires multiple LLM calls. These limitations mean that the present results should be interpreted as evidence for learned organization as a promising scaling axis, not as a universal replacement for single-model scaling.

%Future work should study richer topology families, automatic topology search, variance-reduced policy-gradient training, and alternative reward designs that provide more informative feedback while preserving role-free specialization. It is also important to test NeuroMAS on longer-horizon agentic tasks, scientific-discovery workflows, coding tasks, and domains where intermediate messages can be inspected for reliability. More broadly, NeuroMAS suggests that scaling language agents may require jointly considering three factors: the strength of the backbone model, the amount of trainable adaptation, and the learned organization through which model calls communicate.

\section*{Acknowledgments}
This work was partially supported by the U.S. National Science Foundation DMS-1925066, DMS-1903226, DMS-2124493, DMS-2311297, DMS-2319279, DMS-2318809 and the National Institutes of Health NIH R01GM152814.

\newpage

\bibliographystyle{plainnat}
\bibliography{references}

@article{williams1992reinforce,
  title     = {Simple Statistical Gradient-Following Algorithms for Connectionist Reinforcement Learning},
  author    = {Williams, Ronald J.},
  journal   = {Machine Learning},
  volume    = {8},
  number    = {3},
  pages     = {229--256},
  year      = {1992},
  publisher = {Springer},
  doi       = {10.1007/BF00992696}
}

@inproceedings{hu2022lora,
  title     = {{LoRA}: Low-Rank Adaptation of Large Language Models},
  author    = {Hu, Edward J. and Shen, Yelong and Wallis, Phillip and Allen-Zhu, Zeyuan and Li, Yuanzhi and Wang, Shean and Wang, Lu and Chen, Weizhu},
  booktitle = {International Conference on Learning Representations},
  year      = {2022}
}

@article{liu2026learning,
  title   = {Learning Decentralized {LLM} Collaboration with Multi-Agent Actor Critic},
  author  = {Liu, Shuo and Chen, Tianle and Amiri, Ryan and Amato, Christopher},
  journal = {arXiv preprint arXiv:2601.21972},
  year    = {2026}
}

@article{yang2025qwen3,
  title   = {{Qwen3} Technical Report},
  author  = {Yang, An and Li, Anfeng and Yang, Baosong and Zhang, Beichen and Hui, Binyuan and Zheng, Bo and others},
  journal = {arXiv preprint arXiv:2505.09388},
  year    = {2025}
}

@article{clark2018arc,
  title   = {Think You Have Solved Question Answering? Try {ARC}, the {AI2} Reasoning Challenge},
  author  = {Clark, Peter and Cowhey, Isaac and Etzioni, Oren and Khot, Tushar and Sabharwal, Ashish and Schoenick, Carissa and Tafjord, Oyvind},
  journal = {arXiv preprint arXiv:1803.05457},
  year    = {2018}
}

@inproceedings{suzgun2022bbh,
  title     = {Challenging {BIG-Bench} Tasks and Whether Chain-of-Thought Can Solve Them},
  author    = {Suzgun, Mirac and Scales, Nathan and Sch{\"a}rli, Nathanael and Gehrmann, Sebastian and Tay, Yi and Chung, Hyung Won and Chowdhery, Aakanksha and Le, Quoc V. and Chi, Ed H. and Zhou, Denny and Wei, Jason},
  booktitle = {Findings of the Association for Computational Linguistics: ACL 2023},
  pages     = {13003--13051},
  year      = {2023},
  publisher = {Association for Computational Linguistics}
}

@inproceedings{hendrycks2021mmlu,
  title     = {Measuring Massive Multitask Language Understanding},
  author    = {Hendrycks, Dan and Burns, Collin and Basart, Steven and Zou, Andy and Mazeika, Mantas and Song, Dawn and Steinhardt, Jacob},
  booktitle = {International Conference on Learning Representations},
  year      = {2021}
}

@article{kaplan2020scaling,
  title   = {Scaling Laws for Neural Language Models},
  author  = {Kaplan, Jared and McCandlish, Sam and Henighan, Tom and Brown, Tom B. and Chess, Benjamin and Child, Rewon and Gray, Scott and Radford, Alec and Wu, Jeffrey and Amodei, Dario},
  journal = {arXiv preprint arXiv:2001.08361},
  year    = {2020},
  doi     = {10.48550/arXiv.2001.08361}
}

@inproceedings{hoffmann2022training,
  title     = {Training Compute-Optimal Large Language Models},
  author    = {Hoffmann, Jordan and Borgeaud, Sebastian and Mensch, Arthur and Buchatskaya, Elena and Cai, Trevor and Rutherford, Eliza and de Las Casas, Diego and Hendricks, Lisa Anne and Welbl, Johannes and Clark, Aidan and Hennigan, Tom and Noland, Eric and Millican, Katie and van den Driessche, George and Damoc, Bogdan and Guy, Aurelia and Osindero, Simon and Simonyan, Karen and Elsen, Erich and Rae, Jack W. and Vinyals, Oriol and Sifre, Laurent},
  booktitle = {Advances in Neural Information Processing Systems},
  volume    = {35},
  pages     = {30016--30030},
  year      = {2022}
}

@misc{sutton2019bitter,
  title        = {{The Bitter Lesson}},
  author       = {Sutton, Richard S.},
  year         = {2019},
  howpublished = {\url{http://www.incompleteideas.net/IncIdeas/BitterLesson.html}},
  note         = {Online essay}
}

@inproceedings{madaan2023selfrefine,
  title     = {{Self-Refine}: Iterative Refinement with Self-Feedback},
  author    = {Madaan, Aman and Tandon, Niket and Gupta, Prakhar and Hallinan, Skyler and Gao, Luyu and Wiegreffe, Sarah and Alon, Uri and Dziri, Nouha and Prabhumoye, Shrimai and Yang, Yiming and Gupta, Shashank and Majumder, Bodhisattwa Prasad and Hermann, Katherine and Welleck, Sean and Yazdanbakhsh, Amir and Clark, Peter},
  booktitle = {Advances in Neural Information Processing Systems},
  volume    = {36},
  pages     = {46534--46562},
  year      = {2023}
}

@inproceedings{wu2023autogen,
  title     = {{AutoGen}: Enabling Next-Gen {LLM} Applications via Multi-Agent Conversations},
  author    = {Wu, Qingyun and Bansal, Gagan and Zhang, Jieyu and Wu, Yiran and Li, Beibin and Zhu, Erkang and Jiang, Li and Zhang, Xiaoyun and Zhang, Shaokun and Liu, Jiale and Awadallah, Ahmed Hassan and White, Ryen W. and Burger, Doug and Wang, Chi},
  booktitle = {Proceedings of the First Conference on Language Modeling},
  year      = {2024}
}

@inproceedings{qian2024chatdev,
  title     = {{ChatDev}: Communicative Agents for Software Development},
  author    = {Qian, Chen and Liu, Wei and Liu, Hongzhang and Chen, Nuo and Dang, Yufan and Li, Jiahao and Yang, Cheng and Chen, Weize and Su, Yusheng and Cong, Xin and Xu, Juyuan and Li, Dahai and Liu, Zhiyuan and Sun, Maosong},
  booktitle = {Proceedings of the 62nd Annual Meeting of the Association for Computational Linguistics (Volume 1: Long Papers)},
  pages     = {15174--15186},
  year      = {2024},
  address   = {Bangkok, Thailand},
  publisher = {Association for Computational Linguistics},
  doi       = {10.18653/v1/2024.acl-long.810}
}

@article{li2024moreagents,
  title   = {More Agents Is All You Need},
  author  = {Li, Junyou and Zhang, Qin and Yu, Yangbin and Fu, Qiang and Ye, Deheng},
  journal = {Transactions on Machine Learning Research},
  year    = {2024}
}

@inproceedings{wang2025mixture,
  title     = {Mixture-of-Agents Enhances Large Language Model Capabilities},
  author    = {Wang, Junlin and Wang, Jue and Athiwaratkun, Ben and Zhang, Ce and Zou, James},
  booktitle = {International Conference on Learning Representations},
  year      = {2025}
}

@inproceedings{zhuge2024gptswarm,
  title     = {{GPTSwarm}: Language Agents as Optimizable Graphs},
  author    = {Zhuge, Mingchen and Wang, Wenyi and Kirsch, Louis and Faccio, Francesco and Khizbullin, Dmitrii and Schmidhuber, J{\"u}rgen},
  booktitle = {Proceedings of the 41st International Conference on Machine Learning},
  series    = {Proceedings of Machine Learning Research},
  volume    = {235},
  pages     = {62743--62767},
  year      = {2024},
  publisher = {PMLR}
}

@inproceedings{zhou2025mass,
  title     = {Multi-Agent Design: Optimizing Agents with Better Prompts and Topologies},
  author    = {Zhou, Han and Wan, Xingchen and Sun, Ruoxi and Palangi, Hamid and Iqbal, Shariq and Vuli{\'c}, Ivan and Korhonen, Anna and Arik, Sercan O.},
  booktitle = {International Conference on Learning Representations},
  year      = {2026}
}

@inproceedings{yang2025agentnet,
  title     = {{AgentNet}: Decentralized Evolutionary Coordination for {LLM}-Based Multi-Agent Systems},
  author    = {Yang, Yingxuan and Chai, Huacan and Shao, Shuai and Song, Yuanyi and Qi, Siyuan and Rui, Renting and Zhang, Weinan},
  booktitle = {Advances in Neural Information Processing Systems},
  year      = {2025}
}

@inproceedings{motwani2024malt,
  title     = {{MALT}: Improving Reasoning with Multi-Agent {LLM} Training},
  author    = {Motwani, Sumeet Ramesh and Smith, Chandler and Das, Rocktim Jyoti and Rafailov, Rafael and Torr, Philip H. S. and Laptev, Ivan and Pizzati, Fabio and Clark, Ronald and Schroeder de Witt, Christian},
  booktitle = {Proceedings of the Second Conference on Language Modeling},
  year      = {2025}
}

@inproceedings{shazeer2017outrageously,
  title     = {Outrageously Large Neural Networks: The Sparsely-Gated Mixture-of-Experts Layer},
  author    = {Shazeer, Noam and Mirhoseini, Azalia and Maziarz, Krzysztof and Davis, Andy and Le, Quoc V. and Hinton, Geoffrey E. and Dean, Jeff},
  booktitle = {International Conference on Learning Representations},
  year      = {2017}
}

@article{openai2023gpt4,
  title   = {{GPT-4} Technical Report},
  author  = {{OpenAI}},
  journal = {arXiv preprint arXiv:2303.08774},
  year    = {2023}
}

@inproceedings{wei2022chain,
  title     = {Chain-of-Thought Prompting Elicits Reasoning in Large Language Models},
  author    = {Wei, Jason and Wang, Xuezhi and Schuurmans, Dale and Bosma, Maarten and Ichter, Brian and Xia, Fei and Chi, Ed H. and Le, Quoc V. and Zhou, Denny},
  booktitle = {Advances in Neural Information Processing Systems},
  volume    = {35},
  pages     = {24824--24837},
  year      = {2022}
}

@inproceedings{yao2023react,
  title     = {{ReAct}: Synergizing Reasoning and Acting in Language Models},
  author    = {Yao, Shunyu and Zhao, Jeffrey and Yu, Dian and Du, Nan and Shafran, Izhak and Narasimhan, Karthik and Cao, Yuan},
  booktitle = {International Conference on Learning Representations},
  year      = {2023}
}

@inproceedings{yao2023tree,
  title     = {Tree of Thoughts: Deliberate Problem Solving with Large Language Models},
  author    = {Yao, Shunyu and Yu, Dian and Zhao, Jeffrey and Shafran, Izhak and Griffiths, Thomas L. and Cao, Yuan and Narasimhan, Karthik},
  booktitle = {Advances in Neural Information Processing Systems},
  volume    = {36},
  year      = {2023}
}

@inproceedings{hong2024metagpt,
  title     = {{MetaGPT}: Meta Programming for A Multi-Agent Collaborative Framework},
  author    = {Hong, Sirui and Zhuge, Mingchen and Chen, Jonathan and Zheng, Xiawu and Cheng, Yuheng and Wang, Jinlin and Zhang, Ceyao and Wang, Zili and Yau, Steven Ka Shing and Lin, Zijuan and Zhou, Liyang and Ran, Chenyu and Xiao, Lingfeng and Wu, Chenglin and Schmidhuber, J{\"u}rgen},
  booktitle = {International Conference on Learning Representations},
  year      = {2024}
}

@inproceedings{he2016deep,
  title     = {Deep Residual Learning for Image Recognition},
  author    = {He, Kaiming and Zhang, Xiangyu and Ren, Shaoqing and Sun, Jian},
  booktitle = {Proceedings of the IEEE Conference on Computer Vision and Pattern Recognition},
  pages     = {770--778},
  year      = {2016}
}

@article{elsken2019neural,
  title   = {Neural Architecture Search: A Survey},
  author  = {Elsken, Thomas and Metzen, Jan Hendrik and Hutter, Frank},
  journal = {Journal of Machine Learning Research},
  volume  = {20},
  number  = {55},
  pages   = {1--21},
  year    = {2019}
}

@inproceedings{brown2020language,
  title     = {Language Models are Few-Shot Learners},
  author    = {Brown, Tom B. and Mann, Benjamin and Ryder, Nick and Subbiah, Melanie and Kaplan, Jared and Dhariwal, Prafulla and Neelakantan, Arvind and Shyam, Pranav and Sastry, Girish and Askell, Amanda and others},
  booktitle = {Advances in Neural Information Processing Systems},
  volume    = {33},
  pages     = {1877--1901},
  year      = {2020}
}

@article{chowdhery2023palm,
  title   = {{PaLM}: Scaling Language Modeling with Pathways},
  author  = {Chowdhery, Aakanksha and Narang, Sharan and Devlin, Jacob and Bosma, Maarten and Mishra, Gaurav and Roberts, Adam and Barham, Paul and Chung, Hyung Won and Sutton, Charles and Gehrmann, Sebastian and others},
  journal = {Journal of Machine Learning Research},
  volume  = {24},
  number  = {240},
  pages   = {1--113},
  year    = {2023}
}

@article{fedus2022switch,
  title   = {Switch Transformers: Scaling to Trillion Parameter Models with Simple and Efficient Sparsity},
  author  = {Fedus, William and Zoph, Barret and Shazeer, Noam},
  journal = {Journal of Machine Learning Research},
  volume  = {23},
  number  = {120},
  pages   = {1--39},
  year    = {2022}
}

@inproceedings{sardana2023beyond,
  title     = {Beyond Chinchilla-Optimal: Accounting for Inference in Language Model Scaling Laws},
  author    = {Sardana, Nikhil and Portes, Jacob and Doubov, Sasha and Frankle, Jonathan},
  booktitle = {Proceedings of the 41st International Conference on Machine Learning},
  series    = {Proceedings of Machine Learning Research},
  volume    = {235},
  pages     = {43445--43460},
  year      = {2024},
  publisher = {PMLR}
}

@inproceedings{zhou2023least,
  title     = {Least-to-Most Prompting Enables Complex Reasoning in Large Language Models},
  author    = {Zhou, Denny and Sch{\"a}rli, Nathanael and Hou, Le and Wei, Jason and Scales, Nathan and Wang, Xuezhi and Schuurmans, Dale and Cui, Claire and Bousquet, Olivier and Le, Quoc V. and Chi, Ed H.},
  booktitle = {International Conference on Learning Representations},
  year      = {2023}
}

@inproceedings{du2023improving,
  title     = {Improving Factuality and Reasoning in Language Models through Multiagent Debate},
  author    = {Du, Yilun and Li, Shuang and Torralba, Antonio and Tenenbaum, Joshua B. and Mordatch, Igor},
  booktitle = {Proceedings of the 41st International Conference on Machine Learning},
  series    = {Proceedings of Machine Learning Research},
  volume    = {235},
  pages     = {11733--11763},
  year      = {2024},
  publisher = {PMLR}
}

@article{joo2025graph,
  title   = {Graph of Agents: Principled Long Context Modeling by Emergent Multi-Agent Collaboration},
  author  = {Joo, Taejong and Ishida, Shu and Sosnovik, Ivan and Lim, Bryan and Rezaei-Shoshtari, Sahand and Gaier, Adam and Giaquinto, Robert},
  journal = {arXiv preprint arXiv:2509.21848},
  year    = {2025}
}

@inproceedings{miao2023selfcheck,
  title     = {{SelfCheck}: Using {LLMs} to Zero-Shot Check Their Own Step-by-Step Reasoning},
  author    = {Miao, Ning and Teh, Yee Whye and Rainforth, Tom},
  booktitle = {International Conference on Learning Representations},
  year      = {2024}
}

@article{bauer2019deep,
  title   = {On Deep Learning as a Remedy for the Curse of Dimensionality in Nonparametric Regression},
  author  = {Bauer, Benedikt and Kohler, Michael},
  journal = {The Annals of Statistics},
  volume  = {47},
  number  = {4},
  pages   = {2261--2285},
  year    = {2019},
  doi     = {10.1214/18-AOS1747}
}

@misc{anthropic2025claudecode,
  title        = {Claude Code},
  author       = {{Anthropic}},
  year         = {2025},
  howpublished = {\url{https://www.anthropic.com/product/claude-code}},
  note         = {Accessed: 2026-05-09}
}

@misc{openai2025codex,
  title        = {Introducing Codex},
  author       = {{OpenAI}},
  year         = {2025},
  howpublished = {\url{https://openai.com/index/introducing-codex/}},
  note         = {Accessed: 2026-05-09}
}

@misc{openclaw2026,
  title        = {{OpenClaw}: Personal {AI} Assistant},
  author       = {{OpenClaw}},
  year         = {2026},
  howpublished = {\url{https://openclaw.ai/}},
  note         = {Accessed: 2026-05-09}
}

@misc{deepmind2025deepthink,
  title        = {Advanced Version of {Gemini} with Deep Think Officially Achieves Gold-Medal Standard at the International Mathematical Olympiad},
  author       = {{Google DeepMind}},
  year         = {2025},
  howpublished = {\url{https://deepmind.google/blog/advanced-version-of-gemini-with-deep-think-officially-achieves-gold-medal-standard-at-the-international-mathematical-olympiad/}},
  note         = {Accessed: 2026-05-09}
}

@article{liu2023agentbench,
  title   = {{AgentBench}: Evaluating {LLMs} as Agents},
  author  = {Liu, Xiao and Yu, Hao and Zhang, Hanchen and Xu, Yifan and Lei, Xuanyu and Lai, Hanyu and Gu, Yu and Ding, Hangliang and Men, Kaiwen and Yang, Kejuan and others},
  journal = {arXiv preprint arXiv:2308.03688},
  year    = {2023}
}

@article{wang2023voyager,
  title   = {{Voyager}: An Open-Ended Embodied Agent with Large Language Models},
  author  = {Wang, Guanzhi and Xie, Yuqi and Jiang, Yunfan and Mandlekar, Ajay and Xiao, Chaowei and Zhu, Yuke and Fan, Linxi and Anandkumar, Anima},
  journal = {arXiv preprint arXiv:2305.16291},
  year    = {2023}
}

@article{fan2024factor,
  title     = {Factor Augmented Sparse Throughput Deep {ReLU} Neural Networks for High Dimensional Regression},
  author    = {Fan, Jianqing and Gu, Yihong},
  journal   = {Journal of the American Statistical Association},
  volume    = {119},
  number    = {548},
  pages     = {2680--2694},
  year      = {2024},
  publisher = {Taylor \& Francis}
}

@inproceedings{vaswani2017attention,
  title     = {Attention Is All You Need},
  author    = {Vaswani, Ashish and Shazeer, Noam and Parmar, Niki and Uszkoreit, Jakob and Jones, Llion and Gomez, Aidan N. and Kaiser, {\L}ukasz and Polosukhin, Illia},
  booktitle = {Advances in Neural Information Processing Systems},
  volume    = {30},
  year      = {2017}
}

@article{lecun1998gradient,
  title     = {Gradient-Based Learning Applied to Document Recognition},
  author    = {LeCun, Yann and Bottou, L{\'e}on and Bengio, Yoshua and Haffner, Patrick},
  journal   = {Proceedings of the IEEE},
  volume    = {86},
  number    = {11},
  pages     = {2278--2324},
  year      = {1998},
  publisher = {IEEE}
}

@misc{nousresearch2026hermes,
  author       = {{Nous Research}},
  title        = {Hermes {A}gent: Open-Source Autonomous {AI} {A}gent},
  howpublished = {\url{https://hermes-agent.org/}},
  year         = {2026},
  note         = {Accessed: 2026-05-15}
}

@article{chen2021humaneval,
  title   = {Evaluating Large Language Models Trained on Code},
  author  = {Chen, Mark and Tworek, Jerry and Jun, Heewoo and Yuan, Qiming and Pinto, Henrique Ponde de Oliveira and Kaplan, Jared and Edwards, Harri and Burda, Yuri and Joseph, Nicholas and Brockman, Greg and others},
  journal = {arXiv preprint arXiv:2107.03374},
  year    = {2021}
}

@article{gemma2025gemma3,
  title   = {{Gemma 3} Technical Report},
  author  = {{Gemma Team} and Kamath, Aishwarya and Ferret, Johan and Pathak, Shreya and Vieillard, Nino and Merhej, Ramona and Perrin, Sarah and others},
  journal = {arXiv preprint arXiv:2503.19786},
  year    = {2025}
}

@inproceedings{wang2023selfconsistency,
  title     = {Self-Consistency Improves Chain of Thought Reasoning in Language Models},
  author    = {Wang, Xuezhi and Wei, Jason and Schuurmans, Dale and Le, Quoc V. and Chi, Ed H. and Narang, Sharan and Chowdhery, Aakanksha and Zhou, Denny},
  booktitle = {International Conference on Learning Representations},
  year      = {2023}
}

\clearpage

\appendix

\begin{center}
{\LARGE \textbf{Appendix for ``NeuroMAS: Multi-Agent Systems as Neural Networks''}}
\end{center}

\section*{Contents}

\begin{itemize}
    \item Appendix~\ref{app:method_details}: Additional Method Details
    \item Appendix~\ref{app:proof}: Proofs for the Theoretical Results
    \item Appendix~\ref{app:exp-details}: Additional Experimental Details
\end{itemize}

\section{Additional Method Details}
\label{app:method_details}

\subsection{Policy-Gradient Implementation}
\label{app:policy_gradient_implementation}

This section describes how the score-function objective in Sections~\ref{sec:preliminaries} and~\ref{sec:method_training} is implemented. For a single language policy, the REINFORCE identity can be implemented by minimizing the sampled surrogate loss
\begin{equation}
\label{eq:sampled_loss_single}
\mathcal L(\theta)
=
-
\bigl(r(Y,\widehat Y)-b(X,Y)\bigr)
\log p_\theta(\widehat Y\mid X),
\end{equation}
where \(b(X,Y)\) is a baseline that does not depend on the sampled output \(\widehat Y\). The baseline reduces variance without changing the expectation of the gradient estimator.

For autoregressive language generation, the sequence log-probability is
\[
\log p_\theta(\widehat Y\mid X)
=
\sum_{t=1}^{S}
\log p_\theta(\widehat Y_t\mid X,\widehat Y_{<t}).
\]
In implementation, we use the mean token log-probability
\[
\bar \ell
=
\frac{1}{S}
\sum_{t=1}^{S}
\log p_\theta(\widehat Y_t\mid X,\widehat Y_{<t}),
\]
which reduces sensitivity to generated sequence length.

For NeuroMAS, the same construction is applied node-wise. For a hidden node \(v_{\ell,j}\) with input context \(C_{\ell,j}\) and generated text \(Z_{\ell,j}\), define
\[
\bar \ell_{\ell,j}
=
\frac{1}{S_{\ell,j}}
\sum_{t=1}^{S_{\ell,j}}
\log
\pi_{\theta_{\ell,j}}
\left(
Z_{\ell,j,t}
\mid
C_{\ell,j}, Z_{\ell,j,<t}
\right).
\]
The node-level surrogate loss is
\[
\mathcal L_{\ell,j}
=
-
\bigl(r(Y,\widehat Y)-b_{\ell,j}\bigr)
\bar \ell_{\ell,j}.
\]
The output node is treated analogously:
\[
\mathcal L_{\mathrm{out}}
=
-
\bigl(r(Y,\widehat Y)-b_{\mathrm{out}}\bigr)
\bar \ell_{\mathrm{out}}.
\]
The total sampled loss for one trajectory is
\begin{equation}
\label{eq:sampled_loss}
\mathcal L
=
\sum_{\ell=1}^{L}
\sum_{j=1}^{n_\ell}
\mathcal L_{\ell,j}
+
\mathcal L_{\mathrm{out}}.
\end{equation}
Thus, all nodes are weighted by the same terminal reward, but each node contributes the log-probability of its own generated text.

In our experiments, each node baseline is maintained as an exponential moving average of observed rewards,
\[
b_v \leftarrow \rho b_v + (1-\rho) r(Y,\widehat Y),
\]
with \(\rho=0.9\). The resulting loss is optimized with AdamW. 

\subsection{Prompt Templates and Message Parsing}
\label{app:message_parsing}

This section provides the fixed prompt templates and parsing rules used by NeuroMAS. The templates are not learned. They provide the original query, structural information, and incoming messages, but do not assign semantic roles such as planner, solver, critic, verifier, or judge. In the experiments, the topology \(\tau\) denotes a layered fully connected feed-forward communication graph, analogous to the layer structure of an MLP. Nodes are organized into hidden layers, and every node in layer \(\ell\) sends a message to every node in layer \(\ell+1\). The width of each layer specifies the topology, such as \(2\text{-}2\) for two hidden layers with two nodes per layer.
\paragraph{Hidden-node prompt.}
For a hidden node \(v_{\ell,j}\), the input context \(C_{\ell,j}\) is constructed from the original query \(X\), the topology \(\tau\), the node position \((\ell,j)\), and the messages received from all nodes in the previous layer. For first-layer nodes, the incoming-message field is omitted. The hidden-node prompt template is:
\begin{promptbox}{Hidden-node prompt template}
[Network: {n_layers}-layer {topology}, {total_nodes} nodes |
 Layer {layer_index}/{n_layers}, Position {position}/{layer_size}]

PROBLEM:
{question}

PREVIOUS:
[from L{prev_layer}P{prev_position_1}]: {incoming_message_1}
[from L{prev_layer}P{prev_position_2}]: {incoming_message_2}
...

Write a private message to each downstream node.
Use exactly this format:

TO #1: <message for downstream node 1>
TO #2: <message for downstream node 2>
...
\end{promptbox}

For first-layer nodes, the \texttt{PREVIOUS} block is omitted because there are no incoming hidden-layer messages. The number of \texttt{TO \#k} fields is determined by the number of nodes in the next layer.

\paragraph{Output-node prompt.}
The output node receives the original query \(X\), the topology \(\tau\), and the messages from the final hidden layer. The output-node prompt template is:
\begin{promptbox}{Output-node prompt template}
[Output node | Network: {n_layers}-layer {topology},
 {total_nodes} hidden nodes]

PROBLEM:
{question}

PREVIOUS:
[from L{final_layer}P1]: {final_layer_message_1}
[from L{final_layer}P2]: {final_layer_message_2}
...

{task_specific_answer_footer}
\end{promptbox}

The task-specific answer footer enforces the required output format. For ARC-Challenge, MMLU-Abstract Algebra, MMLU-College Physics, and MMLU-Professional Medicine, the footer is:
\begin{promptbox}{Multiple-choice answer footer}
Answer with A, B, C, or D.
Answer:
\end{promptbox}

For BBH-Navigate, the footer is:
\begin{promptbox}{Yes/no answer footer}
Answer Yes or No.
Answer:
\end{promptbox}

For HumanEval, the footer is:
\begin{promptbox}{HumanEval answer footer}
Provide the function body:
{function_prompt}
\end{promptbox}

\paragraph{Summarizer prompt.}
When a separate summarizer is used, it receives the original query \(X\), the topology \(\tau\), and the outputs of the relevant hidden nodes. The summarizer prompt template is:
\begin{promptbox}{Summarizer prompt template}
[Summarizer | Network: {n_layers}-layer {topology},
 {total_nodes} hidden nodes total]

PROBLEM:
{question}

ALL OUTPUTS:
[N1]: {node_1_output}
[N2]: {node_2_output}
...

{task_specific_answer_footer}
\end{promptbox}

\paragraph{Message parsing.}
Each non-output node generates text \(Z_{\ell,j}\) in the structured format specified by the hidden-node prompt. A deterministic parser extracts recipient-specific messages:
\[
\sigma_{\ell,j}(Z_{\ell,j})
=
\left(
M_{\ell,j\rightarrow \ell+1,1},
\ldots,
M_{\ell,j\rightarrow \ell+1,n_{\ell+1}}
\right),
\qquad \ell<L.
\]
For final hidden-layer nodes, the parser extracts the message sent to the output node:
\[
M_{L,j\rightarrow \mathrm{out}}
=
\sigma_{L,j}^{\mathrm{out}}(Z_{L,j}).
\]

If a generated text cannot be parsed, the parser uses a deterministic fallback: the unparsed text is assigned to the first recipient and empty messages are assigned to the remaining recipients.

\subsection{LoRA Adapter Details}
\label{app:parameterization_details}

All experiments use node-specific LoRA adapters on top of the shared frozen backbone. For each adapted pretrained weight matrix \(W\), node \(v\) uses
\[
W_v'
=
W+\Delta W_v,
\qquad
\Delta W_v
=
B_vA_v,
\]
where \(A_v\) and \(B_v\) are node-specific low-rank matrices. The output aggregation node has its own adapter.

During progressive growth, newly added nodes are initialized with inactive adapters by setting
\[
B_v=0,
\]
so that \(\Delta W_v=0\). Thus, a newly added node initially behaves like the shared frozen backbone under its input context and specializes only through subsequent reward-based training.

% Adapter placement, rank, dropout, optimizer settings, and memory-management details are reported in Appendix~\ref{app:experimental_details}.

\subsection{Progressive Growth Schedule}
\label{app:growth_schedule}

The main progressive-growth schedule uses three topology sizes:
\[
\tau_3=[1,1],
\qquad
\tau_5=[2,2],
\qquad
\tau_7=[2,2,2],
\]
corresponding to NeuroMAS-3, NeuroMAS-5, and NeuroMAS-7, respectively. The transition from \(\tau_3\) to \(\tau_5\) expands width, and the transition from \(\tau_5\) to \(\tau_7\) expands depth.

When a trained topology \(\tau\) is expanded to a larger topology \(\tau'\), nodes that have corresponding positions in the expanded topology inherit their adapter parameters. Newly added nodes are initialized with inactive adapters as described in Section~\ref{app:parameterization_details}. The expanded system is then trained further with the same terminal-reward objective.

In our implementation, inherited nodes are reused when their local input-output interface remains compatible after expansion. If a node's position changes in a way that changes its role in the computation, such as moving from the final hidden layer to an intermediate hidden layer, its adapter is still transferred but continues adapting during subsequent training rather than being treated as fixed.

\subsection{Algorithmic Summary}
\label{app:algorithmic_summary}

Given input \(X\) and topology \(\tau\), a NeuroMAS forward pass proceeds as follows:
\begin{enumerate}
    \item Construct the input context \(C_{1,j}\) for each first-layer node using the query \(X\), the node's structural position, and the topology.
    \item Sample the generated text \(Z_{1,j}\) from each first-layer node policy.
    \item Parse each \(Z_{1,j}\) into recipient-specific messages for the next layer.
    \item Repeat context construction, generation, and parsing for all subsequent hidden layers.
    \item Construct the output-node context \(C_{\mathrm{out}}\) from \(X\), \(\tau\), and the final hidden-layer messages.
    \item Sample the final answer \(\widehat Y\) from the output node.
\end{enumerate}

For reward-based training, compute the terminal reward \(r(Y,\widehat Y)\), assign this same reward to all sampled node outputs, and update the node-specific trainable parameters using the sampled loss in Equation~\eqref{eq:sampled_loss}. For progressive growth, train the current topology, expand it by transferring inherited adapters and initializing new adapters, and then continue reward-based training under the expanded topology.

%================================================================
\newpage
\section{Proofs for the Theoretical Results}
\label{app:proof}

This appendix provides details for the theoretical results in Section~\ref{sec:theory}. The main text treats a conditional generator abstractly as a distribution over output text. We first connect this notation to autoregressive language models and then prove the compositional parameter-efficiency bound and its fixed-error equivalent.

\subsection{Autoregressive Generators and Component Errors}
\label{app:autoregressive_generator}

A conditional generator \(\pi(\cdot\mid X)\) defines a distribution over finite text sequences. When implemented by an autoregressive language model, the probability of a generated answer
\[
\widehat Y=(\widehat Y_1,\ldots,\widehat Y_{\widehat T})
\]
factorizes as
\[
\pi(\widehat Y\mid X)
=
\prod_{t=1}^{\widehat T}
\pi(\widehat Y_t\mid X,\widehat Y_{<t}).
\]
The theoretical results do not require this factorization, but it connects the abstract generator notation used in Section~\ref{sec:theory} to standard language-model notation.

The component-level exact-match error in Assumption~\ref{ass:local_learnability} is defined analogously to the final exact-match error. If component \(i\) has ideal output \(Z_i^*\), and an approximating component produces \(\widehat Z_i\), then its component error is
\[
\Pr(\widehat Z_i\neq Z_i^*).
\]
The object \(Z_i^*\) may be an intermediate message, partial answer, aggregation state, or refined state. The theorem only requires that local component errors can be controlled and that the final composition is stable with respect to those errors.

\subsection{Proof of Theorem~\ref{thm:hierarchical_composition_bound}}

\begin{proof}
By Assumption~\ref{ass:compositional_solution}, the ideal solution process can be represented by composing \(K\) generative components
\[
\psi_1^*,\ldots,\psi_K^*.
\]
Construct a modular multi-agent generator by assigning one trainable node to approximate each component. Allocate the total trainable parameter budget \(q\) evenly across the \(K\) components, so that
\[
q_i=\frac{q}{K}
\]
parameters are assigned to component \(i\).

By Assumption~\ref{ass:local_learnability}, component \(i\) can be approximated with error
\[
\eta_i(q_i)
\leq
C_{\mathrm{loc}}q_i^{-1/a}
=
C_{\mathrm{loc}}
\left(
\frac{q}{K}
\right)^{-1/a}
=
C_{\mathrm{loc}}K^{1/a}q^{-1/a}.
\]
Thus, every component can be approximated to error at most
\[
\epsilon
=
C_{\mathrm{loc}}K^{1/a}q^{-1/a}.
\]

By Assumption~\ref{ass:stable_composition}, if the component errors are at most \(\epsilon\), then the final exact-match error of the composed generator is at most
\[
C_{\mathrm{stab}}K\epsilon.
\]
Substituting the bound on \(\epsilon\) gives
\[
C_{\mathrm{stab}}K\epsilon
=
C_{\mathrm{stab}}K
\left(
C_{\mathrm{loc}}K^{1/a}q^{-1/a}
\right)
=
C_{\mathrm{stab}}C_{\mathrm{loc}}K^{1+1/a}q^{-1/a}.
\]
Let
\[
C_{\mathrm{multi}}=C_{\mathrm{stab}}C_{\mathrm{loc}}.
\]
The constructed modular generator therefore has final exact-match error at most
\[
C_{\mathrm{multi}}K^{1+1/a}q^{-1/a}.
\]
Since \(\varepsilon_{\mathrm{multi},q}\) is the smallest achievable error over all modular multi-agent generators with at most \(q\) trainable parameters, it is no larger than the error of this constructed generator. Therefore,
\[
\varepsilon_{\mathrm{multi},q}
\leq
C_{\mathrm{multi}}K^{1+1/a}q^{-1/a}.
\]
Equivalently,
\[
\varepsilon_{\mathrm{multi},q}
=
O\left(K^{1+1/a}q^{-1/a}\right).
\]
This proves the theorem.
\end{proof}

\subsection{Proof of Corollary~\ref{cor:single_comparison}}

\begin{proof}
By Theorem~\ref{thm:hierarchical_composition_bound}, there exists \(C_{\mathrm{multi}}>0\) such that
\[
\varepsilon_{\mathrm{multi},q}
\leq
C_{\mathrm{multi}}K^{1+1/a}q^{-1/a}.
\]
By Assumption~\ref{ass:global_difficulty}, there exists \(c_{\mathrm{single}}>0\) such that, for sufficiently large \(q\),
\[
\varepsilon_{\mathrm{single},q}
\geq
c_{\mathrm{single}}q^{-1/b}.
\]
Because \(b>a\), we have \(1/a>1/b\), and hence
\[
\frac{q^{-1/a}}{q^{-1/b}}
=
q^{-(1/a-1/b)}
\rightarrow 0
\qquad
\text{as } q\rightarrow\infty.
\]
Since \(K\) is fixed,
\[
\frac{
C_{\mathrm{multi}}K^{1+1/a}q^{-1/a}
}{
c_{\mathrm{single}}q^{-1/b}
}
=
\frac{C_{\mathrm{multi}}K^{1+1/a}}{c_{\mathrm{single}}}
q^{-(1/a-1/b)}
\rightarrow 0.
\]
Therefore, there exists \(q_0>0\) such that, for all \(q\geq q_0\),
\[
C_{\mathrm{multi}}K^{1+1/a}q^{-1/a}
<
c_{\mathrm{single}}q^{-1/b}.
\]
Combining the upper bound on \(\varepsilon_{\mathrm{multi},q}\) with the lower bound on \(\varepsilon_{\mathrm{single},q}\), we obtain
\[
\varepsilon_{\mathrm{multi},q}
<
\varepsilon_{\mathrm{single},q}
\]
for all \(q\geq q_0\). This proves the corollary.
\end{proof}

\subsection{Fixed-Error Parameter Comparison}
\label{app:fixed_error_comparison}

The fixed-budget comparison in Corollary~\ref{cor:single_comparison} can also be expressed as a fixed-error parameter comparison. For a target exact-match error \(\delta>0\), define
\[
q_{\mathrm{multi}}(\delta)
=
\inf\left\{
q:
\varepsilon_{\mathrm{multi},q}\leq \delta
\right\},
\qquad
q_{\mathrm{single}}(\delta)
=
\inf\left\{
q:
\varepsilon_{\mathrm{single},q}\leq \delta
\right\}.
\]
From Theorem~\ref{thm:hierarchical_composition_bound}, it is sufficient for the modular generator to choose \(q\) such that
\[
C_{\mathrm{multi}}K^{1+1/a}q^{-1/a}
\leq
\delta.
\]
Solving for \(q\) gives
\[
q
\geq
C_{\mathrm{multi}}^aK^{1+a}\delta^{-a}.
\]
Therefore,
\[
q_{\mathrm{multi}}(\delta)
=
O\left(K^{1+a}\delta^{-a}\right).
\]

For the unstructured single-agent generator, Assumption~\ref{ass:global_difficulty} gives
\[
\varepsilon_{\mathrm{single},q}
\geq
c_{\mathrm{single}}q^{-1/b}.
\]
Thus, to reach error at most \(\delta\), it is necessary that
\[
c_{\mathrm{single}}q^{-1/b}
\leq
\delta.
\]
Solving for \(q\) gives
\[
q
\geq
c_{\mathrm{single}}^b\delta^{-b}.
\]
Hence,
\[
q_{\mathrm{single}}(\delta)
=
\Omega\left(\delta^{-b}\right).
\]
Since \(b>a\), \(\delta^{-b}\) grows faster than \(\delta^{-a}\) as \(\delta\to 0\). Therefore, for sufficiently small \(\delta\), the modular class requires fewer trainable parameters than the unstructured single-agent class, up to constants and the polynomial overhead in \(K\).

\newpage
\section{Additional Experimental Details}
\label{app:exp-details}

\subsection{Implementation Details}
\label{app:objective}

We optimize with AdamW with learning rate \(2\times 10^{-5}\), and gradient clipping at \(1.0\). Evaluation uses greedy decoding with \texttt{max\_new\_tokens}=200. Accuracy is measured after canonicalizing outputs into the expected answer format.

\subsection{Datasets and Evaluation Protocol}
\label{app:datasets}

Checkpoint selection is performed on a held-out development split that is disjoint from final evaluation. We save checkpoints periodically during training and report the checkpoint with the best development accuracy under greedy decoding. Final test accuracy is computed only after checkpoint selection.

Evaluation accuracy is measured after canonicalizing outputs into the expected answer format. Multiple-choice and yes/no tasks use canonicalized answer matching; for HumanEval, correctness is assessed by executing the generated code against the official unit tests.

\subsection{Compute Resources}
\label{app:compute}

Experiments were conducted on NVIDIA RTX 5090 and NVIDIA H100 GPUs.

\end{document}